\newcommand{\citep}{\cite}
\newcommand{\dd}{\mathcal{\dagger}}
\begin{document}
\title{Wasserstein Proximal of GANs}
%
%
\author{Alex Tong Lin\inst{1} \and
Wuchen Li\inst{2} \and
Stanley Osher\inst{1} \and
Guido Mont\'{u}far\inst{1,3}
}
\authorrunning{Lin, Li, Osher, Mont\'{u}far}
%
\institute{University of California, Los Angeles, Los Angeles CA 90095, USA \and
University of South Carolina, Columbia SC 29208, USA \and
Max Planck Institute for Mathematics in Sciences, Leipzig 04103, Germany}
\maketitle              
\begin{abstract}
We introduce a new method for training generative adversarial networks by applying the Wasserstein-2 metric proximal on the generators. The approach is based on Wasserstein information geometry. It defines a parametrization invariant natural gradient by pulling back optimal transport structures from probability space to parameter space. We obtain easy-to-implement iterative regularizers for the parameter updates of implicit deep generative models. 
Our experiments demonstrate that this method improves the speed and stability of training in terms of wall-clock time and Fr\'echet Inception Distance 

\keywords{Generative-Adversarial Networks \and Wasserstein Metric \and Natural Gradient}
\end{abstract}

\section{Introduction}

Generative Adversarial Networks (GANs) \citep{OGAN} are a powerful approach to learning generative models. Here, a discriminator tries to tell apart the data generated by a real source and the data generated by a generator, whereas the generator tries to fool the discriminator. This adversarial game is formulated as an optimization problem over the discriminator and an implicit generative model for the generator. 
An implicit generative model is a parametrized family of functions mapping a noise source to sample space. In trying to fool the discriminator, the generator should try to recreate the real source. 

The problem of recreating a target density can be formulated as the minimization of a discrepancy measure. The Kullback--Leibler (KL) divergence is known to be difficult to work with when the densities have a low dimensional support set, as is commonly the case in applications with structured data and high dimensional sample spaces. An alternative  is to use the Wasserstein distance or Earth Mover's distance, which is based on optimal transport theory. This has been used recently to define the loss function for learning generative models~\citep{Boltzman,LWL}. In particular, the Wasserstein GAN~\citep{WGAN} has attracted much interest in recent years. 

Besides defining the loss function, optimal transport can also be used to introduce structures serving the \emph{optimization} itself, in terms of the gradient operator. In full probability space, this method is known as the Wasserstein steepest descent flow \cite{JKO,otto2001}. 
In this paper we derive the Wasserstein steepest descent flow for deep generative models in GANs. 
We use the Wasserstein-2 metric function, which allows us to obtain a Riemannian structure and a corresponding natural (i.e., Riemannian) gradient. 
A well known example of a natural gradient is the Fisher-Rao natural gradient, which is induced by the KL-divergence. 
In learning problems, one often finds that the natural gradients offer advantages compared to the Euclidean gradient~\citep{NG,IG}. 

In GANs, the densities under consideration typically have a small support set, which prevents implementations of the Fisher-Rao natural gradient. 
Therefore, we propose to use the gradient operator induced by the Wasserstein-2 metric on probability models~\citep{LM,LiMontufar2018_riccia}.

We propose to compute the parameter updates of the generators in GANs by means of a proximal operator where the proximal penalty
is a squared constrained Wasserstein-2 distance. 
In practice, the constrained distance can be approximated by a 
neural network. 
In implicit generative models, the constrained Wasserstein-2 metric exhibits a simple structure. 
We generalize the Riemannian metric and introduce two methods: the relaxed proximal operator for generators and the semi-backward Euler method. 
Both approaches lead to practical numerical implementations 
of the Wasserstein proximal operator for GANs. The method can be easily implemented as a drop-in regularizer for the generator updates. Experiments demonstrate that this method improves the stability of training and reduces the training time. 

\smallskip 

This paper is organized as follows. 
In Section~\ref{section2} we introduce the Wasserstein natural gradient and proximal optimization methods. 
In Section~\ref{sec:implicit} we review basics of implicit generative models. 
In Section~\ref{Compute} we derive practical computational methods and study their theoretical properties. 
In Section~\ref{section4} we demonstrate the effectiveness of the proposed methods in experiments with various types of GANs. 
In Section~\ref{headings} we comment on related work, and in Section~\ref{sec:discussion} we offer a brief discussion.

\section{Wasserstein natural proximal optimization}\label{section2}

In this section, we present the Wasserstein natural gradient and the corresponding proximal method. 

\subsection{Motivation and illustration}
\label{ex}

The natural gradient method is an approach to parameter optimization in probability models, which has been promoted especially within information geometry~\citep{IG,IG2}. 
This method chooses the steepest descent direction when the size of the step is measured by means of a metric on probability space. 

In this way, the natural gradient is parameterization invariant \citep{NG} and provides more stability in training. 
In contrast, the ordinary gradient method follows the steepest descent direction calculated from Euclidean distance in parameter space. This can be unstable because distances in parameter space do not reflect distances in probability space, and the parameterization of the model affects the descent direction.

If $F(\theta)$ is the loss function, 
the steepest descent direction 
is the vector $d\theta$ that solves 
\begin{align}\label{nat-grad_fish-rao}
\min_{d\theta} F(\theta + d\theta)\quad\textrm{subject to}\quad D(\rho_{\theta}, \rho_{\theta +d\theta}) = \epsilon, 
\end{align}
for a small enough $\epsilon$. 
Here $D$ is a divergence function on probability space. 
Expanding the divergence to second order and solving 
leads to an update of the form 
\begin{equation*}
d\theta \;\propto\; G(\theta)^{-1}\nabla_\theta F(\theta), 
\end{equation*}
where $G$ is the Hessian 
of $D$. 
Usually the Fisher-Rao metric is considered for $G$, which corresponds to having $D$ as the KL-divergence. 

In this work, we use structures derived from optimal transport. Concretely, we replace $D$ in equation (\ref{nat-grad_fish-rao}) with the Wasserstein-$p$ distance. This is defined as 
\begin{equation}
W_p(\rho_\theta, \rho_{\theta^k})^p=\inf \int_{\mathbb{R}^n\times \mathbb{R}^n}\|x-y\|^p \pi(x,y)dxdy, 
\label{eq:W}
\end{equation}
where the infimum is over all joint probability densities $\pi(x,y)$ with marginals $\rho_{\theta}$, $\rho_{\theta^k}$. 
We will focus on $p=2$. The Wasserstein-2 distance introduces a metric tensor in probability space, making it an infinite dimensional Riemannian manifold. We will introduce a finite dimensional metric tensor $G$ on the parameter space of a generative model. 

The Wasserstein metric allows us to define a natural gradient even when the support of the distributions is low dimensional and the Fisher-Rao natural gradient is not well defined. We will use the proximal operator, which computes the parameter update by minimizing the loss function plus a penalty on the step size. 
This saves us the need to compute the matrix $G$ and its inverse explicitly. 
As we will show, the Wasserstein metric can be translated to practical proximal methods for implicit generative models. 
We first present a toy example, with explicit calculations, to illustrate the effectiveness of Wasserstein proximal operator. 

\begin{figure}[h]
	\centering
	\includegraphics[width=5cm]{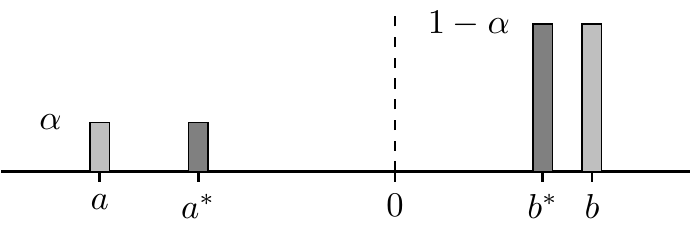}
	\includegraphics[width=4.5cm]{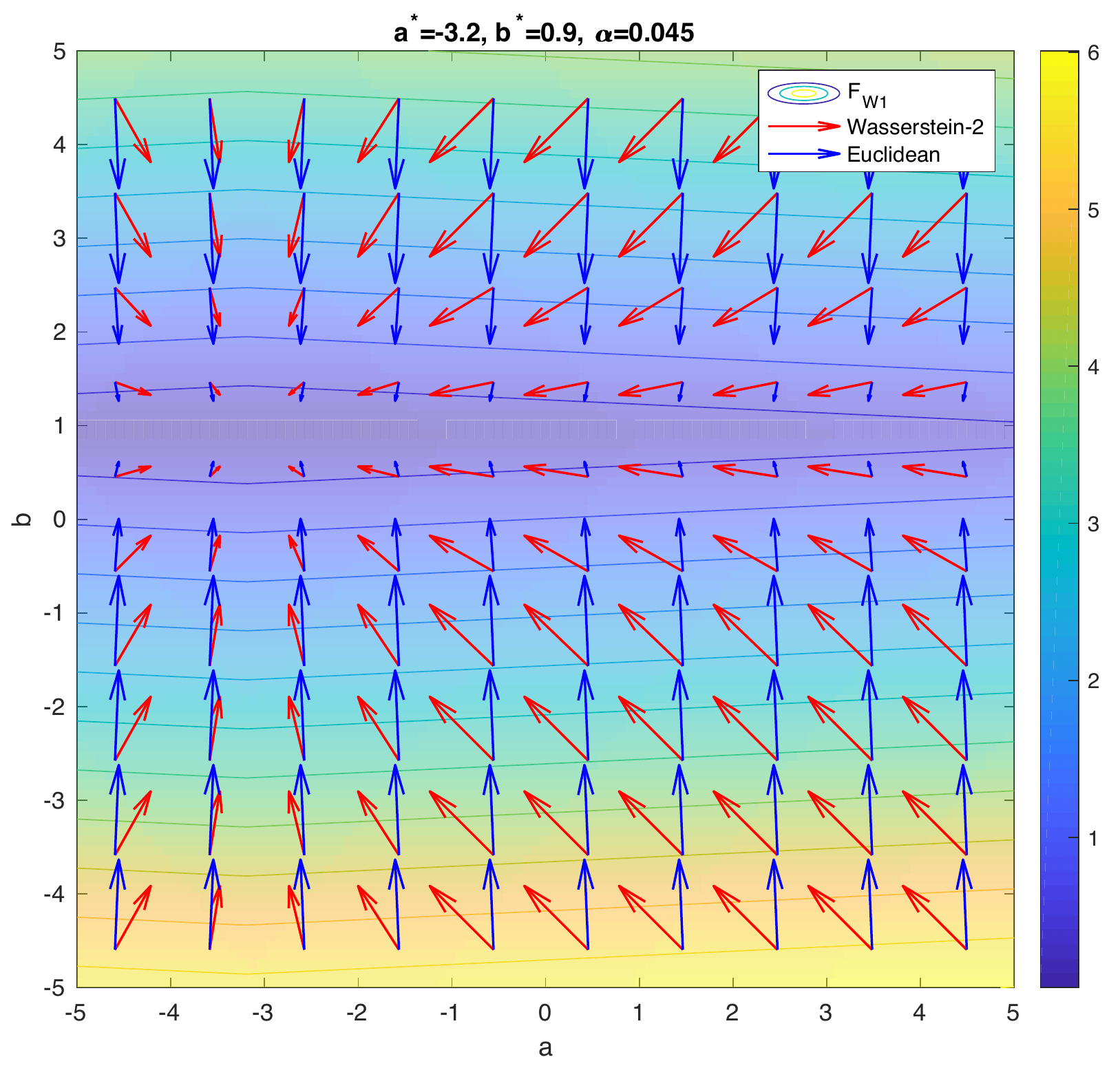}
	\caption{Illustration of the Wasserstein proximal operator. Here the Wasserstein proximal penalizes parameter steps in proportion to the mass being transported, which results in updates pointing towards the minimum of the loss function. The Euclidean proximal penalizes all parameters equally, which results in updates naively orthogonal to the level sets of the loss function.}
	\label{figure1}
\end{figure}

\begin{example}\label{exp1}
	Consider a probability model consisting of mixtures of pairs of delta measures. 
	Let $\Theta=\{\theta=(a,b)\in\mathbb{R}^2 \colon a<0<b\}$, and define
	\begin{equation*}
	\rho(\theta, x)=\alpha \delta_{a}(x)+(1-\alpha) \delta_b(x),
	\end{equation*}
	where $\alpha \in [0,1]$ is a given ratio and $\delta_a(x)$ is the delta measure supported at point $a$. See Figure~\ref{figure1}. 
	For a loss function $F$, 
	the proximal update is 
	\begin{equation*}
	\theta^{k+1}=\arg\min_{\theta\in\Theta} F(\theta)+\frac{1}{2h}D(\rho_\theta, \rho_{\theta^k}). 
	\end{equation*}
	We check the following common choices for the function $D$ to measure the distance between $\theta$ and $\theta^k$, $\theta\neq \theta^k$.
	
	\begin{enumerate}
		\item Wasserstein-2 distance:
		\begin{equation*}
		W_2(\rho_{\theta}, \rho_{\theta^k})^2= \alpha (a-a^k)^2+(1-\alpha)(b-b^k)^2;
		\end{equation*}
		\item Euclidean distance:
		\begin{equation*}
		\|\theta-\theta^k\|^2= {(a-a^k)^2+(b-b^k)^2};
		\end{equation*}
		\item Kullback--Leibler divergence:
		\begin{equation*}
		D_{\operatorname{KL}}(\rho_\theta\|\rho_{\theta^k})=\int_{\mathbb{R}^n} \rho(\theta,x) \log\frac{\rho(\theta,x)}{\rho(\theta^k,x)}dx=\infty;
		\end{equation*}
		\item $L^2$-distance:
		\begin{equation*}
		{L^2}(\rho_\theta, \rho_{\theta^k})=\int_{\mathbb{R}^n}|\rho(\theta, x)-\rho(\theta^k,x)|^2dx =\infty.
		\end{equation*}
	\end{enumerate}
	As we see, the KL-divergence and $L^2$-distance take value infinity, which tells the two parameters apart, but does not quantify the difference in a useful way. 
	The Wasserstein-2 and Euclidean distances still work in this case. 
	The Euclidean distance captures the difference in the locations of the delta measures, but not their relative weights. On the other hand, the Wasserstein-2 takes these into account. 
	The right panel of Figure~\ref{figure1} illustrates the loss function $F(\theta) = W_1(\rho_\theta, \rho_{\theta^\ast})$ for a random choice of $\theta^\ast$, alongside with the Euclidean and Wasserstein-2 proximal parameter updates. 
	The Wasserstein proximal update points more consistently in the direction of the global minimum. 
\end{example}

\subsection{Wasserstein natural gradient}

We next present the Wasserstein natural gradient operator for parametrized probability models. 

\begin{definition}[Wasserstein natural gradient operator]\label{prop2} 
	Given a model of probability densities 
	$\rho(\theta, x)$ over $x\in\mathbb{R}^n$, with locally injective parametrization by $\theta\in \Theta\subseteq\mathbb{R}^d$, and a loss function $F\colon \Theta\rightarrow \mathbb{R}$, 
	the Wasserstein natural gradient operator 
	is given by 
	\begin{equation*}
	\operatorname{grad} F(\theta)=G(\theta)^{-1}\nabla_\theta F(\theta). 
	\end{equation*}
	Here $G(\theta)=(G(\theta)_{ij})_{1\leq i,j\leq d}\in\mathbb{R}^{d\times d}$ is the matrix with entries 
	\begin{equation*}
	G(\theta)_{ij}=\int_{\mathbb{R}^n}\Big(\nabla_{\theta_i}\rho(\theta,x), \mathcal{G}(\rho_\theta)\nabla_{\theta_j}\rho(\theta,x)\Big) dx,
	\end{equation*}
	where $\mathcal{G}(\rho_\theta)$ is the Wasserstein-2 metric tensor in probability space. More precisely, $\mathcal{G}(\rho)=(-\Delta_{\rho})^{-1}$ is the inverse of the elliptic operator $\Delta_{\rho}:=\nabla\cdot(\rho\nabla)$.
\end{definition}

For completeness, we briefly explain the definition of the Wasserstein natural gradient.  The gradient operator on a Riemannian manifold $(\Theta, g)$ is defined as follows. 
For any $\sigma\in T_\theta\Theta$, the Riemannian gradient $\nabla_\theta^g F(\theta)\in T_\theta \Theta$ satisfies $g_\theta(\sigma, \textrm{grad} F(\theta))= (\nabla_\theta F(\theta), \sigma)$. 
In other words, $\sigma^\top G(\theta)  \textrm{grad}F(\theta)=\nabla_\theta F(\theta)^\top \sigma$. 
Since $\theta\in \mathbb{R}^d$ and $G(\theta)$ is positive definite, $\textrm{grad}F(\theta)=G(\theta)^{-1}\nabla_\theta F(\theta)$. 

Our main focus will be in deriving practical computational methods that allow us to apply these structures to optimization in GANs. 
Consider the gradient flow of the loss function: 
\begin{equation}\label{GD}
\frac{d\theta}{dt}=-\textrm{grad} F(\theta)=-G(\theta)^{-1}\nabla_\theta F(\theta). 
\end{equation}
There are several
discretization schemes for a gradient flow of this type. 
One of them is the forward Euler method, known as the 
steepest descent method: 
\begin{equation}\label{FE}
\theta^{k+1}=\theta^k-hG(\theta^k)^{-1} \nabla_\theta F({\theta^k}), 
\end{equation}
where $h>0$ is the learning rate (step size). 
In practice we usually do not have a closed formula for the metric tensor $G(\theta)$. 
In~\eqref{FE}, we need to solve for the inverse Laplacian operator, the Jacobian of the probability model, and
compute the inverse of $G(\theta)$.
When the parameter $\theta\in \Theta$ is high dimensional, these computations are impractical. 
Therefore, we will consider a different approach based on the proximal method. 

\subsection{Wasserstein natural proximal}

To practically apply the Wasserstein natural gradient, we present an alternative way to
discretize the gradient flow, known as the proximal method {or backward Euler method}. The proximal operator computes updates of the form 
\begin{equation}\label{proximal}
\theta^{k+1}=\arg\min_{\theta}~F(\theta)+\frac{\textrm{Dist}(\theta, \theta^k)^2}{2h},
\end{equation}
where $\textrm{Dist}$ is an iterative regularization term, given by the Riemannian distance: 
\begin{equation*}
\textrm{Dist}(\theta, \theta^k)^2=  \inf\Big\{\int_0^1 \dot\theta_t^\top G(\theta_t)\dot\theta_t dt\colon \theta_0=\theta,~\theta_1=\theta^k\Big\}.
\end{equation*}
Here the infimum is taken among all continuously differentiable parameter paths $\theta_t=\theta(t)\in \Theta$, $t\in[0,1]$. 
The proximal operator is defined implicitly, in terms of a minimization problem, but in some cases it can be written explicitly. 
Interestingly, it allows us to consider an iterative regularization term in the parameter update. 

We observe that there are two time variables in the proximal update (\ref{proximal}). 
One is the time discretization of gradient flow, known as the learning rate $h>0$; 
the other is the time variable in the definition of the Riemannian distance $\textrm{Dist}(\theta, \theta^k)$. 
The variation in the time variable of the Riemannian distance can be further simplified as follows. 

\begin{proposition}[Semi-backward Euler method]
	\label{prop3}
	The iteration 
	\begin{equation}\label{new}
	\theta^{k+1}=\arg\min_{\theta} F(\theta)+\frac{\tilde{D}(\theta, \theta^k)^2}{2h},
	\end{equation}
	with 
	\begin{equation*}
	\tilde{D}(\theta, \theta^k)^2=\int_{\mathbb{R}^n}\Big( \rho_\theta-\rho_{\theta^k} , \mathcal{G}(\rho_{\tilde\theta})(\rho_\theta-\rho_{\theta^k})\Big)dx,
	\end{equation*}
	and $\tilde\theta=\frac{\theta+\theta^k}{2}$, is a consistent time discretization of the Wassserstein natural gradient flow~(\ref{GD}).
\end{proposition}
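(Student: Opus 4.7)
The plan is to verify \emph{consistency} by showing that, as the step size $h \to 0$, the implicit update defined by~(\ref{new}) satisfies
$\theta^{k+1} = \theta^k - h\, G(\theta^k)^{-1}\nabla_\theta F(\theta^k) + O(h^2)$,
which is exactly the forward Euler discretization~(\ref{FE}) of the Wasserstein natural gradient flow~(\ref{GD}). The strategy is standard for backward-Euler-type schemes: write the first-order optimality conditions of the minimization, expand $\tilde{D}(\theta,\theta^k)^2$ in a Taylor series around $\theta = \theta^k$, and match orders in $h$.

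First I would postulate an ansatz $\theta^{k+1} = \theta^k + h\,\xi + O(h^2)$ for some direction $\xi\in\mathbb{R}^d$ to be determined, so that the midpoint satisfies $\tilde\theta = \theta^k + \tfrac{h}{2}\xi + O(h^2)$. Since the parametrization $\theta\mapsto\rho_\theta$ is assumed smooth,
\begin{equation*}
\rho_{\theta^{k+1}} - \rho_{\theta^k} = h\sum_{i=1}^d \xi_i \nabla_{\theta_i}\rho_{\theta^k} + O(h^2),
\end{equation*}
and, using continuity of the map $\rho\mapsto\mathcal{G}(\rho)$, one has $\mathcal{G}(\rho_{\tilde\theta}) = \mathcal{G}(\rho_{\theta^k}) + O(h)$. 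Substituting into the definition of $\tilde{D}^2$ and matching the entries to Definition~\ref{prop2}, this gives the crucial identity
\begin{equation*}
\tilde{D}(\theta^{k+1},\theta^k)^2 = h^2 \,\xi^\top G(\theta^k)\,\xi + O(h^3).
\end{equation*}
The factor of $\tfrac{1}{2}$ in $\tilde\theta$ plays no role at this order, but it is what ensures the scheme is symmetric and therefore retains the interpretation of a midpoint (semi-backward Euler) rule.

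Plugging this into the objective in~(\ref{new}) and Taylor-expanding $F$, the variational problem becomes
\begin{equation*}
\min_{\xi}\;F(\theta^k) + h\,\nabla_\theta F(\theta^k)^\top \xi + \tfrac{h}{2}\,\xi^\top G(\theta^k)\,\xi + O(h^2).
\end{equation*}
This is a strictly convex quadratic in $\xi$ (since $G(\theta^k)\succ 0$), whose unique minimizer is $\xi = -G(\theta^k)^{-1}\nabla_\theta F(\theta^k)$. Substituting back into the ansatz gives exactly~(\ref{FE}) up to $O(h^2)$, which is the definition of consistency with the continuous flow~(\ref{GD}).

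The main obstacle is justifying that the expansion of $\mathcal{G}(\rho_{\tilde\theta}) = (-\nabla\cdot(\rho_{\tilde\theta}\nabla))^{-1}$ in $h$ is legitimate, since $\mathcal{G}$ is the inverse of an elliptic operator and one needs appropriate regularity and non-degeneracy of $\rho_{\tilde\theta}$ for the Neumann series / resolvent expansion to hold. This can be handled by restricting to a neighborhood of $\theta^k$ where $\rho_\theta$ has the regularity assumed in the definition of the Wasserstein metric tensor, so that the perturbation $\mathcal{G}(\rho_{\tilde\theta}) = \mathcal{G}(\rho_{\theta^k}) + O(h)$ is valid in the operator norm required for the bilinear form $\tilde D^2$ to expand as above; in the implicit-model setting of Section~\ref{sec:implicit} one instead interprets $\mathcal{G}$ through the pullback to parameter space, which bypasses the elliptic inversion entirely and makes the expansion a purely finite-dimensional computation.
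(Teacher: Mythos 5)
Your argument is correct, but it takes a different route from the paper's. You verify consistency directly in parameter space: posit $\theta^{k+1}=\theta^k+h\xi+O(h^2)$, linearize $\rho_\theta-\rho_{\theta^k}$ and freeze $\mathcal{G}(\rho_{\tilde\theta})$ at $\rho_{\theta^k}$, read off $\tilde D^2=h^2\xi^\top G(\theta^k)\xi+O(h^3)$ from Definition~\ref{prop2}, and minimize the resulting quadratic to recover the forward Euler step. The paper instead proves two identities: first, via a reparametrized geodesic, that $(\theta-\theta^k)^\top G(\tilde\theta)(\theta-\theta^k)$ agrees with the true squared Riemannian distance $\operatorname{Dist}(\theta,\theta^k)^2$ up to $o(h^2)$, so that the iteration~(\ref{new}) coincides with the exact backward Euler proximal step~(\ref{proximal}) to that order; and second, via the elliptic equation $-\nabla\cdot(\rho_{\tilde\theta}\nabla\Phi^\ast)=\rho_\theta-\rho_{\theta^k}$ satisfied by the maximizer, that this quadratic form equals the dual variational expression $\sup_\Phi\int\Phi(\rho_\theta-\rho_{\theta^k})-\tfrac12\|\nabla\Phi\|^2\rho_{\tilde\theta}\,dx$. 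It then concludes with the implicit form $\theta^{k+1}=\theta^k-hG(\tilde\theta)^{-1}\nabla_\theta F(\theta^{k+1})+o(h)$. Your route is more elementary and entirely finite-dimensional once the pullback metric is granted, but it proves less: the paper's geodesic comparison justifies calling~(\ref{new}) a \emph{semi-backward Euler} approximation of the proximal iteration itself (not merely a consistent scheme for the flow), and its dual identity is what Section~\ref{computationa:SBElinear} relies on for the affine approximation. Two points you should make explicit to close your version: (i) that the global argmin in~(\ref{new}) actually lies within $O(h)$ of $\theta^k$ for small $h$ (so the ansatz captures it) — this follows because the penalty $\tfrac{1}{2h}\tilde D^2$ dominates away from $\theta^k$ when $G$ is positive definite, and the paper makes the analogous assumption $\|\theta-\theta^k\|=h$ without further comment; and (ii) the operator-norm continuity of $\rho\mapsto\mathcal{G}(\rho)$ on the relevant class of densities, which you correctly flag as the main analytic obstacle and which the paper likewise assumes implicitly.
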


Here the distance term in~(\ref{proximal}) is replaced by $\tilde{D}$, which is obtained by a mid-point approximation in time. 
The mid-point $\tilde \theta$ can be chosen in many ways between $\theta$ and $\theta^k$. 
For simplicity and symmetry,
we let $\tilde\theta=\frac{\theta+\theta^k}{2}$. 
In practice, we also use $\tilde\theta = \theta^k$, since in this case $\mathcal{G}(\rho_{\tilde\theta})$ can be held fixed when iterating over $\theta$ to obtain~\eqref{new}. 
Formula~(\ref{new}) is called the semi-backward Euler method (SBE), because it can also be expressed as 
\begin{equation*}
\theta^{k+1}=\theta^k-hG(\tilde\theta)^{-1}\nabla_\theta F(\theta^{k+1})+o(h).   
\end{equation*}
The proof is contained in the appendix.

We point out that all methods described above, i.e., the forward Euler method~(\ref{FE}), 
the backward Euler method~(\ref{proximal}), 
and the semi-backward Euler method~(\ref{new}), 
are time consistent discretizations of the Wasserstein natural gradient flow~(\ref{GD}) with first order accuracy in time. 
We shall focus on the semi-backward Euler method and derive practical formulas for the iterative regularization term. 

\section{Implicit generative models} 
\label{sec:implicit}

Before proceeding, we briefly recall the setting of Generative Adversarial Networks (GANs). The practical purpose of GANs is to train a model to produce samples from a (complicated) target distribution.
This technique has been met with remarkable success today. 

GANs consist of two parts: the \emph{generator} and the \emph{discriminator}. The generator is a function $g_\theta\colon \mathbb{R}^\ell \rightarrow \mathbb{R}^n$ that takes inputs $z$ in latent space $\mathbb{R}^\ell$ with distribution $p(z)$ (a common choice is a Gaussian) to outputs $x=g_\theta(z)$ in sample space $\mathbb{R}^n$ with distribution $\rho(\theta,x)$. 
The objective of training is to find a value of the parameter $\theta$ so that $\rho(\theta,x)$ matches a given target distribution, say $\rho_{\text{target}}(x)$. 
The discriminator is merely an assistance during optimization of a GAN in order to obtain the right parameter value for the generator. 
It is a function $f_\omega:\mathbb{R}^n \rightarrow \mathbb{R}$, whose role is to discriminate real images (sampled from the target distribution) from fake images (produced by the generator). 

To train a GAN, one works on min-maxing a function such as
\begin{equation*}
\inf_{\theta} \sup_{\omega} \mathbb{E}_{x\sim \rho_{\text{target}}(x)}\left[ \log f_\omega(x) \right] + \mathbb{E}_{z\sim p(z)}\left[ \log(1 - f_\omega(g_\theta(z)) \right]. 
\end{equation*}
The specific loss function can be chosen in many different ways (including the Wasserstein-1 loss~\citep{IMGAN,WGAN}), but the above is the one that was first considered for GANs, and is a common choice in applications. 
The first term is interpreted as the log of the confidence that the discriminator has about the data $x$ being genuine,
and the second term is interpreted as the log of the confidence that the discriminator has about the data $g_\theta(z)$ being not genuine. 
During training, we ideally want to train the discriminator to detect real-world samples from generator samples, but then at the end of training, we want a generator that produces samples that are indistinguishable from real samples (and thus will also fool the discriminator). 

Practically, to perform the optimization, we adopt an alternating gradient optimization scheme for the generator parameter $\theta$ and the discriminator parameter $\omega$. 
This is iterated until a sufficient convergent criteria is reached (usually examining when the loss functions stabilize). We will implement a Wasserstein proximal method for optimizing GANs. 

\section{Computational methods}
\label{Compute}

In this section, we present two methods for implementing the 
Wasserstein natural proximal for GANs. 
The first method is based on solving the variational formulation of the proximal penalty over an affine space of functions. 
This leads to a low-order version of the Wasserstein metric tensor $\mathcal{G}(\rho_\theta)$. 
The second method is based on a formula for the Wasserstein metric tensor for 1-dimensional sample spaces, which we relax to sample spaces of arbitrary dimension. 

\subsection{Affine space variational approximation}
\label{computationa:SBElinear}

The mid point approximation $\tilde{D}$ from Proposition~\ref{prop3} can be written using dual coordinates (cotangent space) of probability space in the variational form 
\begin{equation*}
\tilde{D}(\theta, \theta^k)^2
=\sup_{\Phi\in C^{\infty}(\mathbb{R}^n)}\Big\{\int_{\mathbb{R}^n}\Phi(x)(\rho(\theta, x)-\rho(\theta^k,x))  - \frac{1}{2}\|\nabla\Phi(x)\|^2\rho(\tilde\theta,x) \, dx\Big\}.
\end{equation*}
In order to obtain an explicit formula, 
we consider a function approximator of the form 
\begin{equation*}
\Phi_\xi(x) := \sum_{j}\xi_j \psi_j(x)=\xi^\top\Psi(x),
\end{equation*}
where $\Psi(x)=(\psi_j(x))_{j=1}^K$ are given basis functions on sample space $\mathbb{R}^n$, and $\xi=(\xi_j)_{j=1}^K\in\mathbb{R}^K$ is the parameter. 
In other words, we consider 
\begin{equation}
\begin{split}
\tilde{D}(\theta, \theta^k)^2=&\sup_{\xi\in \mathbb{R}^K} \Big\{\int_{\mathbb{R}^n}\Phi_\xi(x)(\rho(\theta, x)-\rho(\theta^k,x)) - \frac{1}{2}\|\nabla\Phi_\xi(x)\|^2\rho(\tilde\theta,x) \, dx\Big\}.
\end{split}
\label{eq:affinedef}
\end{equation}

\begin{theorem}[Affine metric function $\tilde D$]\label{thm4}
	Consider some $\Psi=(\psi_1,\ldots,\psi_K)^{\top}$ and 
	assume that $M(\theta)=(M_{ij}(\theta))_{1\leq i,j\leq K}\in \mathbb{R}^{K\times K}$ is a regular matrix with entries
	\begin{equation*}
	M_{ij}(\theta)=\mathbb{E}_{Z\sim p}\Big(\sum_{l=1}^n\partial_{x_l}\psi_i(g(\tilde\theta, Z)) \partial_{x_l}\psi_j(g(\tilde\theta, Z))  \Big),  
	\end{equation*}
	where $\tilde \theta=\frac{\theta+\theta^k}{2}$. 
	Then, 
	\begin{equation*}
	\begin{split}
	\tilde{D}(\theta, \theta^k)^2 = &\Big(\mathbb{E}_{Z\sim p}[\Psi(g(\theta, Z))-\Psi(g(\theta^k, Z))]\Big)^\top \\
	&M(\tilde \theta)^{-1}\Big(\mathbb{E}_{Z\sim p}[\Psi(g(\theta, Z))-\Psi(g(\theta^k, Z))]\Big). 
	\end{split}
	\end{equation*}
\end{theorem}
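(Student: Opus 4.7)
The plan is to substitute the affine ansatz $\Phi_\xi(x) = \xi^\top \Psi(x)$ directly into the variational formula~\eqref{eq:affinedef} for $\tilde{D}(\theta,\theta^k)^2$, reducing the infinite-dimensional supremum over smooth test functions to a finite-dimensional quadratic program in $\xi \in \mathbb{R}^K$, and then solve that program in closed form.

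First I would rewrite each of the two integrals appearing in~\eqref{eq:affinedef}. Because $\rho(\theta,\cdot)$ is by definition the pushforward of $p(z)$ under $g(\theta,\cdot)$, the linear term transforms as
\begin{equation*}
\int_{\mathbb{R}^n} \Phi_\xi(x)\bigl(\rho(\theta,x)-\rho(\theta^k,x)\bigr)\,dx
= \xi^\top \,\mathbb{E}_{Z\sim p}\bigl[\Psi(g(\theta,Z))-\Psi(g(\theta^k,Z))\bigr] =: \xi^\top b.
\end{equation*}
Using $\nabla\Phi_\xi(x) = \sum_j \xi_j \nabla\psi_j(x)$ and again invoking the pushforward, the quadratic term becomes
\begin{equation*}
\int_{\mathbb{R}^n} \|\nabla\Phi_\xi(x)\|^2 \rho(\tilde\theta,x)\,dx
= \sum_{i,j} \xi_i \xi_j\, \mathbb{E}_{Z\sim p}\Bigl[\sum_l \partial_{x_l}\psi_i(g(\tilde\theta,Z))\,\partial_{x_l}\psi_j(g(\tilde\theta,Z))\Bigr]
= \xi^\top M(\tilde\theta)\,\xi,
\end{equation*}
with $M(\tilde\theta)$ exactly the matrix in the statement.

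Next I would optimize. The supremum in~\eqref{eq:affinedef} reduces to $\sup_{\xi\in\mathbb{R}^K}\{\xi^\top b - \tfrac{1}{2}\xi^\top M(\tilde\theta)\xi\}$. Since $M(\tilde\theta)$ is a Gram-type matrix of gradients (hence positive semidefinite) and is assumed to be regular, hence positive definite, this concave quadratic has a unique maximizer $\xi^\ast = M(\tilde\theta)^{-1}b$, and the optimal value is $\tfrac{1}{2} b^\top M(\tilde\theta)^{-1} b$ (matching the stated identity up to the normalization convention used for $\tilde D^2$). Substituting back the explicit expression of $b$ in terms of the generator gives the announced formula.

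There is no real obstacle here: the argument is purely algebraic once the two substitutions above are carried out, and the only nontrivial input is the invertibility hypothesis on $M(\tilde\theta)$, which is used exactly once, to guarantee the quadratic program admits a unique solution. The one point that deserves care in a fully rigorous write-up is the exchange of the $\xi$-supremum with the $Z$-expectation — justified immediately because the dependence on $\xi$ is pointwise quadratic and deterministic given the samples — and verifying that the affine space of test functions is rich enough to form a well-defined variational approximation of the full cotangent-space supremum.
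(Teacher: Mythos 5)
Your proposal is correct and follows essentially the same route as the paper's own proof: substitute the affine ansatz, identify the linear term $\xi^\top b$ and the Gram-matrix quadratic term $\xi^\top M(\tilde\theta)\xi$ via the pushforward, and solve the resulting concave quadratic program with maximizer $\xi^\ast = M(\tilde\theta)^{-1}b$. Your remark about the factor $\tfrac{1}{2}$ is well taken --- the optimal value of the quadratic program is indeed $\tfrac{1}{2}b^\top M(\tilde\theta)^{-1}b$, and the paper's own write-up drops this constant in the final displayed formula as well, so it is a normalization convention (harmless for the proximal update) rather than a gap in your argument.
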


The proof is contained in the appendix. There are many possible choices for the basis $\Psi$. 

For example, if $K=n$ and $\psi_k(x)=x_k$, $k=1,\ldots, n$, then $M(\theta)$ is the identity matrix. 
In this case, 
\begin{equation*}
\tilde{D}(\theta, \tilde\theta)^2=\|\mathbb{E}_{Z\sim p}(g(\theta, Z)-g(\theta^k,Z))\|^2.
\end{equation*}
We will focus on degree one and degree two polynomials. The algorithms are presented in Section~\ref{subsec:algorithms}. {We note we experimented with a three neural network version for Wasserstein natural gradient, where we used an additional neural network to approximate $\Phi(x)$. However, the additional neural network was computationally burdensome in computing the gradient direction. So practically we stick with the above affine approximation with current two neural networks. We notice that the three network version will approximate the Wasserstein natural gradient accurately for scientific computing problems; see details in \cite{2020arXiv200211309L}.}

\subsection{Relaxation from 1-D}
\label{computationa:RWP}

Now we present a second method for approximating $\tilde{D}$. In the case of implicit generative models with 1-dimensional sample space, the constrained Wasserstein-2 metric tensor has an explicit formula. 
This allows us to define a relaxed Wasserstein metric for implicit generative models with sample spaces of arbitrary dimension. In dimension 1, we have (of which the proof is in the appendix)

\begin{theorem}[1-D sample space]\label{1d}
	If $n=1$, then
	\begin{multline*}
	\operatorname{Dist}(\theta_0, \theta_1)^2 = \inf \Big\{\int_0^1\mathbb{E}_{Z\sim p}\|\frac{d}{dt}g(\theta(t), Z)\|^2\, dt \colon \theta(0)=\theta_0, \theta(1)=\theta_1 \Big\}, 
	\end{multline*}
	where the infimum is taken over all continuously differentiable parameter paths. 
	Therefore, we have 
	\begin{equation*}
	\tilde{D}(\theta, \theta^k)^2=\mathbb{E}_{Z\sim p}\|g(\theta, Z)-g(\theta^k, Z)\|^2.
	\end{equation*}
\end{theorem}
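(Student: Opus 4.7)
The plan is to pull the Benamou--Brenier characterization of the Wasserstein-$2$ metric back to the parameter space and then exploit the special feature of one-dimensional sample spaces. For a smooth path $\theta(t)$ with induced density path $\rho_t = \rho(\theta(t),\cdot)$, the definition of $G(\theta)$ from Definition~\ref{prop2} gives $\dot\theta^{\top} G(\theta)\dot\theta = \int \rho_t |\nabla\phi_t|^2 \, dx$, where $\phi_t$ solves $-\nabla\cdot(\rho_t \nabla \phi_t) = \partial_t \rho_t$. By Benamou--Brenier, this equals $\inf_{v_t} \int \rho_t |v_t|^2 \, dx$, the infimum being over all velocity fields satisfying the continuity equation $\partial_t \rho_t + \nabla\cdot(\rho_t v_t) = 0$.

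Next, for the pushforward family $\rho_t = g(\theta(t),\cdot)_{\#} p$, I would introduce the Lagrangian velocity $v_t(x) := \frac{d}{dt}g(\theta(t),z)\big|_{z = g(\theta(t),\cdot)^{-1}(x)}$ and check by a direct push-forward calculation that $(\rho_t, v_t)$ satisfies the continuity equation. The change of variables $x = g(\theta(t),z)$ then yields
\begin{equation*}
\int \rho_t |v_t|^2 \, dx = \mathbb{E}_{Z \sim p} \Big\| \tfrac{d}{dt} g(\theta(t), Z) \Big\|^2,
\end{equation*}
giving an a priori upper bound for $\dot\theta^\top G(\theta)\dot\theta$.

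The critical 1D step is to upgrade this to an equality. In one dimension, the continuity equation reads $\partial_x(\rho_t v_t) = -\partial_t \rho_t$, which, together with decay at infinity, uniquely determines $\rho_t v_t$: there is no rotational freedom in $v_t$ to exploit. Equivalently, any scalar $v_t \in L^2(\rho_t)$ is automatically the derivative of its antiderivative, so the $L^2(\rho_t)$-projection onto gradient fields (where the Benamou--Brenier infimum is attained) is the identity. Hence the Lagrangian velocity coincides with the Wasserstein-optimal one, and $\dot\theta^\top G(\theta)\dot\theta = \mathbb{E}_{Z\sim p}\|\frac{d}{dt}g(\theta(t),Z)\|^2$. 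Integrating in $t$ and taking the infimum over paths gives the first formula.

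For the second formula I would substitute the straight-line interpolation $g(\theta(t),Z) = (1-t)g(\theta^k,Z) + t\, g(\theta,Z)$ into the quantity just derived: its $t$-derivative is the constant $g(\theta,Z)-g(\theta^k,Z)$, and since constant-speed monotone rearrangements are Wasserstein geodesics in 1D, this path actually attains the infimum. Thus $\operatorname{Dist}(\theta,\theta^k)^2 = \mathbb{E}_Z \|g(\theta,Z) - g(\theta^k,Z)\|^2$, which is also the value taken by the midpoint approximation $\tilde D^2$ from Proposition~\ref{prop3}. The hardest part is the 1D uniqueness step when the support of $\rho_t$ is disconnected or the generator $g(\theta,\cdot)$ is nonmonotone; in those cases one must check that the Lagrangian and the gradient-field velocities still agree on the effective support, which I would handle by an approximation argument restricting to absolutely continuous, monotone families and passing to the limit.
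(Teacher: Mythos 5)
Your proof of the first identity is essentially the paper's argument in different packaging. The paper differentiates $\mathbb{E}_{Z\sim p}f(g(\theta(t),Z))$ in two ways---once through the continuity equation and the push-forward relation, once directly---and concludes that $\int \nabla f(g(\theta(t),z))\big(\nabla\Phi(t,g(\theta(t),z))-\tfrac{d}{dt}g(\theta(t),z)\big)p(z)\,dz=0$ for all test functions $f$; since in $\mathbb{R}^1$ the class $\{\nabla f\}$ exhausts all scalar fields, this forces $\tfrac{d}{dt}g(\theta(t),z)=\nabla\Phi(t,g(\theta(t),z))$ and hence the equality of the Lagrangian kinetic energy with the Wasserstein action. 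Your version---the Lagrangian velocity is admissible for the Benamou--Brenier problem, and in 1D the continuity equation plus decay determines $\rho_t v_t$ uniquely so there is no non-gradient component to project away---is the same observation stated in Eulerian rather than weak form. Both versions quietly assume that $\tfrac{d}{dt}g(\theta(t),z)$ factors through $x=g(\theta(t),z)$ (i.e.\ is constant on fibers of a possibly non-injective generator); you at least flag this caveat, the paper does not.

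The second identity is where your route has a genuine problem. You assert $\operatorname{Dist}(\theta,\theta^k)^2=\mathbb{E}_{Z\sim p}\|g(\theta,Z)-g(\theta^k,Z)\|^2$ on the grounds that the constant-speed displacement interpolation attains the infimum. But the infimum in the theorem is taken over parameter paths $\theta(t)\in\Theta$, and the map $z\mapsto(1-t)g(\theta^k,z)+t\,g(\theta,z)$ is in general not of the form $g(\theta(t),\cdot)$ for any path in the model, so it is not an admissible competitor; the Cauchy--Schwarz/Jensen argument only yields $\operatorname{Dist}(\theta,\theta^k)^2\ge\mathbb{E}_{Z\sim p}\|g(\theta,Z)-g(\theta^k,Z)\|^2$, with equality exactly when the model contains that interpolation. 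The quantity $\tilde D$ in the statement is not $\operatorname{Dist}$ but the midpoint one-step approximation of Proposition~\ref{prop3}: the formula $\tilde D(\theta,\theta^k)^2=\mathbb{E}_{Z\sim p}\|g(\theta,Z)-g(\theta^k,Z)\|^2$ comes from freezing the metric at $\tilde\theta$ and dropping the path constraint---which is precisely why the authors call the resulting scheme \emph{relaxed}---not from exhibiting a geodesic inside the parametric family.
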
 

In sample spaces of dimension higher than one, 
we no longer have the explicit formula for $\tilde D$. 
The relaxed metric consists of using the same formulas from the theorem. 
Later on, we show that this formulation of $\tilde D$ still provides a metric with parameterization invariant properties in the proximal update. 

\subsection{Algorithms}
\label{subsec:algorithms}

The Wasserstein natural proximal method for GANs optimizes the parameter $\theta$ of the generator by the proximal iteration~(\ref{new}). 
We implement this in the following ways: 

\paragraph{\bf RWP method.} 
The first and simplest method follows Section~\ref{computationa:RWP}, and updates the generator by:
\begin{align*}
\theta^{k+1}=\arg\min_{\theta\in \Theta} F(\theta) +\frac{1}{2h}{\mathbb{E}_{Z\sim p}\|g(\theta,Z)-g(\theta^{k},Z)\|^2}. 
\end{align*}
We call this the Relaxed Wasserstein Proximal (RWP) method. 

\paragraph{\bf SBE order 1 method.} 
The second method is based on the discussion from Section~\ref{computationa:SBElinear}, approximating $\Phi$ by linear functions. 
We update the generator by: 
\begin{align*}
\theta^{k+1}=\arg\min_{\theta\in \Theta} F(\theta) +\frac{1}{2h}{\|\mathbb{E}_{Z\sim p}\Big( g(\theta,Z)-g(\theta^{k},Z)\Big)\|^2},
\end{align*}
We call this the Order-1 SBE (O1-SBE) method. 

\medskip 
\noindent
\emph{Derivation of SBE order 1.} 
Here $\psi_j(x)=x_j$. Thus if $i=j$, then denoting $x= g(\theta, Z)$ with $Z\sim p$,
\begin{equation*}
M_{ij}(\theta) = \mathbb{E}_{Z\sim p}\sum_{l=1}^n\partial_{x_l}\psi_i(x)\partial_{x_l}\psi_j(x) = 1. 
\end{equation*}
Otherwise, $M_{ij}(\theta)=0$, if $i\neq j$. Thus 
\begin{equation*}
M_{ij}(\tilde\theta)=
\begin{cases}
1 & \textrm{if $i=j$;}\\
0 &\textrm{otherwise.}
\end{cases}
\end{equation*}
This proves the result. 
\qed

\paragraph{\bf SBE order 2 method.}
In an analogous way to the SBE order 1 method, we can approximate $\Phi$ by quadratic functions,  
to obtain the Order-2 SBE (O2Diag-SBE) method: 
\begin{align*}
\theta^{k+1}=&\arg\min_{\theta\in \Theta} F(\theta)\\
&+\frac{1}{h}\bigg(\frac{1}{2}\|\mathbb{E}_{Z\sim p}[g(\theta,Z)-g(\theta^k, Z)]- \mathbb{E}_{Z\sim p}[Q g(\theta^k, Z)]\|^2 \\
&+ \frac{1}{2}\mathbb{E}_{Z\sim p}[\left<g(\theta, Z), Qg(\theta, Z)\right>] -\frac{1}{2}\mathbb{E}_{Z\sim p}[\left<g(\theta^k, Z),Qg(\theta^k, Z)\right>] \\
&- \frac{1}{2}\mathbb{E}_{Z\sim p}[\|Qg(\theta^k,Z)\|^2] \bigg), 
\end{align*}
where $Q=\textrm{diag}(q_i)_{i=1}^n$ is the diagonal matrix with diagonal entries 
\begin{align*}
q_i = &\frac{1}{2}\frac{\mathbb{E}_{Z\sim p}[(g(\theta, Z)_i-g(\theta^k, Z)_i)^2]}{\text{Var}(g(\theta^k, Z)_i)} + \frac{\text{Cov}_{Z\sim p}(g(\theta,Z)_i, g(\theta^k,Z)_i)}{\text{Var}_{Z\sim p}(g(\theta^k,Z)_i)} - 1 ,
\end{align*}
where $g(\theta, Z)_i$ is the $i$th coordinate of the samples. 

\medskip 
\noindent 
\emph{Derivation of SBE order 2.} 
\label{sec:O2SBE} 

Consider 
$$\Phi(x) = \frac{1}{2} x^\top Q x + a^\top x + b,$$ 
with a diagonal matrix $Q = \text{diag}(q_1, \ldots, q_N)$. 
We get that
\begin{equation*}
\begin{split}
&\sup_{a,Q} \Phi(g(\theta,z)) - \Phi(g(\theta_{k-1},(z)) - \frac{1}{2} \|\nabla \Phi(g({\theta_{k-1}}, Z))\|^2 \\
=&\frac{1}{2}\|\mathbb{E}[g(\theta,z)-g(\theta_{k-1},z)-Qg(\theta_{k-1},z)]\|^2  \\
&+\frac{1}{2}\mathbb{E}[g(\theta, Z)^\top Qg(\theta,Z)] - \frac{1}{2}\mathbb{E}[g(\theta_{k-1},Z)^\top Qg(\theta_{k-1},Z)] \\
&-\frac{1}{2}\mathbb{E}[\|Qg(\theta_{k-1},Z)\|^2]
\end{split}
\end{equation*}
which will be used in the O2Diag-SBE update. 
We note that $x=g(\theta,z)$ and $y=g(\theta_{k-1},z)$. Then we have that the above becomes
\begin{multline*}
\sup_{a,Q} \mathbb{E}_{x,y} \left[ a^\top(x-y) - \frac{1}{2}x^\top Qx - \frac{1}{2}y^\top  Qy - \|a + Qy\|^2 \right] \\
= \mathbb{E}_{x,y} \bigg[ a^\top(x-y) - \frac{1}{2}\sum q_i x_i^2 - \frac{1}{2}q_i y_i^2 - \|a + \text{diag}(q_1, \ldots, q_N) y\|^2 \bigg]. 
\end{multline*}
The above is a quadratic equation in $a$ and $Q = \text{diag}(q_1, \ldots, q_N)$, so we can formulate it as  
\begin{align}\label{eq:ell}
(a,Q)\ell - \frac{1}{2}(a,Q)M(a,Q)^\top, 
\end{align}
where $\ell = \Big(\mathbb{E}(x-y), \frac{1}{2}\mathbb{E}(x^2 - y^2)\Big)$, and where
\begin{align}\label{eq:M_diag_order_2}
M = \frac{1}{B}\sum_{b=1}^B \begin{pmatrix} 1 \\ y_b \end{pmatrix} \begin{pmatrix} 1 \\ y_b \end{pmatrix}^\top,  
\end{align}
which is the matrix for the quadratic term $\|a+\text{diag}(q_1, \ldots, q_N) y\|^2$. Then the maximum is attained at 
\begin{align*}
(a^\ast , Q^\ast ) = M^{-1} \ell . 
\end{align*}
By explicitly calculating the inverse $M^{-1}$ (where 
$B$ is a sufficiently large batch size such that $M$ is full rank) and multiplying $\ell$, 
we obtain the formula for $Q^\ast $.
\qed

\medskip 

The methods described above can be regarded as iterative regularizers. 
RWP 
penalizes the expected squared norm of the differences between samples (second moment differences). 
O1-SBE 
penalizes the squared norm of the expected differences between samples. 
O2Diag-SBE penalizes a combination of squared norm of the expected differences plus variances. 
They all encode statistical information of the generators. 
All these approaches \emph{regularize the generator} by the expectation and variance of the samples.
The implementation is shown in Algorithm~\ref{alg:genprox}. 
We also provide a detailed practical guide in Appendix~\ref{sec:rwp_detailed}. 
In the next subsection, we discuss the convergence and consistency properties of these methods. 

\begin{algorithm}
	\caption{Wasserstein Natural Proximal}
	\label{alg:genprox}
	\begin{algorithmic}[1]
		\REQUIRE ${F}_\omega$, a parameterized function to minimize (e.g., the Wasserstein-1 with a parameterized discriminator); $g_\theta$, the generator. 
		\REQUIRE $\text{Optimizer}_{{F}_\omega}$; $\text{Optimizer}_{g_\theta}$. 
		\REQUIRE $h$ proximal step-size; $B$ mini-batch size; max iterations; generator iterations. 
		\FOR{$k=0$ {\bfseries to} max iterations}
		\STATE{Sample real data $\{x_i\}_{i=1}^B$ and latent data $\{z_i\}_{i=1}^B$}
		\STATE{$\omega^k \leftarrow \text{Optimizer}_{{F}_\omega}\left( \frac{1}{B} \sum_{i=1}^B {F}_\omega(g_\theta(z_i)) \right)$}
		\FOR{$\ell=0$ {\bfseries to} generator iterations}
		\STATE{Sample latent data $\{z_i\}_{i=1}^B$}
		\STATE{$\tilde{D} = \text{RWP, or O1-SBE, or O2Diag-SBE (Sec.~\ref{subsec:algorithms})}$}
		\STATE{
			$\theta^k \leftarrow \text{Optimizer}_{g_\theta}\left(\frac{1}{B} \sum_{i=1}^B {F}_\omega(g_\theta(z_i)) + \frac{1}{2h}\tilde{D}(\theta, {\theta^k})^2\right)$.}
		\ENDFOR
		\ENDFOR
	\end{algorithmic}
\end{algorithm}

\FloatBarrier

\subsection{Theoretical guarantees}

We show that the Wasserstein natural proximal algorithms introduced in the previous sections are consistent. 

\begin{theorem}\label{thm6}
	Algorithm~\ref{alg:genprox} provides a consistent numerical time-discretization of the gradient flow
	\begin{equation*}
	\frac{d}{dt}\theta=-\tilde{G}(\theta)^{\dd}\nabla_\theta F(\theta).
	\end{equation*}
	Here $\tilde G^\dd$ is the pseudoinverse of the Hessian of $\tilde D$ and is a positive semi-definite matrix. 
	In particular, the loss function is a Lyapunov function of gradient flow, meaning that it is non-increasing along the gradient flow. If $\theta^\ast $ is a critical point of $F$ and $\lambda_{\min}\Big(\tilde{G}(\theta^\ast )^{\dd}\operatorname{Hess}F(\theta^\ast )\Big)> 0$,  
	then $\theta(t)$ locally converges to $\theta^\ast $. 
\end{theorem}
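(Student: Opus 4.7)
The plan is to reduce all three conclusions to a Taylor expansion of the proximal penalty $\tilde D(\theta, \theta^k)^2$ at the base point $\theta = \theta^k$, then apply standard ODE stability arguments. First I would expand $\tilde D(\theta, \theta^k)^2$ in $\theta$ around $\theta^k$. Both the affine form in \eqref{eq:affinedef} and the 1-D relaxed form from Theorem~\ref{1d} are non-negative and vanish at $\theta = \theta^k$, so $\theta^k$ is a global minimizer, the first-order term is zero, and
\begin{equation*}
\tilde D(\theta, \theta^k)^2 = (\theta - \theta^k)^\top \tilde G(\theta^k)(\theta - \theta^k) + O(\|\theta - \theta^k\|^3),
\end{equation*}
where $\tilde G(\theta^k) := \tfrac{1}{2}\operatorname{Hess}_\theta \tilde D(\theta, \theta^k)^2 \big|_{\theta = \theta^k}$. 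As the Hessian of a smooth non-negative function at a global minimum, $\tilde G(\theta^k)$ is symmetric positive semi-definite; hence so is $\tilde G(\theta^k)^{\dd}$. For the O1-SBE and O2Diag-SBE variants, this quadratic form can be read off directly from Theorem~\ref{thm4} as the pullback of $M(\tilde\theta)^{-1}$ along the Jacobian of $\mathbb{E}_Z \Psi(g(\cdot,Z))$.

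Next, to establish consistency, I would take the first-order optimality condition of the proximal update~\eqref{new}:
\begin{equation*}
\nabla_\theta F(\theta^{k+1}) + \tfrac{1}{h}\tilde G(\theta^k)(\theta^{k+1}-\theta^k) + O\!\bigl(\|\theta^{k+1}-\theta^k\|^2/h\bigr) = 0.
\end{equation*}
Decomposing $\theta^{k+1}-\theta^k$ into components in $\operatorname{range}(\tilde G(\theta^k))$ and $\ker(\tilde G(\theta^k))$, projecting onto the range and inverting, gives
\begin{equation*}
\theta^{k+1} - \theta^k = -h\,\tilde G(\theta^k)^{\dd}\nabla_\theta F(\theta^{k+1}) + o(h),
\end{equation*}
which is a first-order consistent (backward-Euler type) time discretization of $\dot\theta = -\tilde G(\theta)^{\dd}\nabla_\theta F(\theta)$. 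This matches the semi-backward Euler expansion stated after Proposition~\ref{prop3}, adapted to the affine or relaxed metric tensor.

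For the Lyapunov property, I differentiate $F$ along the limiting flow:
\begin{equation*}
\frac{d}{dt}F(\theta(t)) = \nabla_\theta F(\theta)^\top \dot\theta = -\nabla_\theta F(\theta)^\top \tilde G(\theta)^{\dd}\nabla_\theta F(\theta) \le 0,
\end{equation*}
since $\tilde G(\theta)^{\dd}\succeq 0$. For local convergence to a critical point $\theta^\ast$, I linearize via $\delta = \theta - \theta^\ast$, using $\nabla F(\theta^\ast)=0$, to obtain $\dot\delta = -A\,\delta + O(\|\delta\|^2)$ with $A := \tilde G(\theta^\ast)^{\dd}\operatorname{Hess} F(\theta^\ast)$. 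On $\operatorname{range}(\tilde G(\theta^\ast)^{\dd})$, the matrix $A$ is similar to the symmetric $(\tilde G(\theta^\ast)^{\dd})^{1/2}\operatorname{Hess} F(\theta^\ast)(\tilde G(\theta^\ast)^{\dd})^{1/2}$ and therefore has real spectrum. The hypothesis $\lambda_{\min}(A)>0$ then makes $-A$ a stability matrix, and classical Lyapunov theory (or Hartman-Grobman) yields local exponential convergence $\theta(t)\to\theta^\ast$.

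The principal obstacle is the careful treatment of the pseudoinverse: when $\tilde G$ is degenerate, components of the proximal displacement in $\ker \tilde G(\theta^k)$ are not controlled by the penalty at leading order, so one must argue that they are either suppressed (because $F$ is coercive along them) or harmless (because the limiting dynamics preserve $\operatorname{range}(\tilde G)$). The cleanest route is to exploit the variational form of $\tilde D$, where the degeneracy of $\tilde G$ reflects the rank of the Jacobian of $g_\theta$, and then invoke a small-$h$ perturbation argument to ensure the proximal iterates remain within the subspace where $\tilde G$ acts injectively.
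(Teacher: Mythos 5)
Your proposal is correct and follows essentially the same route as the paper's proof: a second-order Taylor expansion of $\tilde D(\cdot,\theta^k)^2$ at its global minimizer $\theta^k$ identifying $\tilde G$ as a positive semi-definite Hessian, the first-order optimality condition of the proximal step yielding the semi-backward-Euler update $\theta^{k+1}=\theta^k-h\tilde G^{\dd}\nabla_\theta F(\theta^{k+1})+o(h)$, and the Lyapunov computation $\frac{d}{dt}F=-\nabla_\theta F^\top\tilde G^{\dd}\nabla_\theta F\le 0$. If anything, you go further than the paper, which omits both the linearization argument for local convergence near $\theta^\ast$ and any discussion of the kernel of $\tilde G$; your explicit treatment of the pseudoinverse degeneracy addresses a point the published proof silently glosses over.
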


\begin{remark}
	{For specially selected families, our approximation of the metric are generalizations of closed form solutions for classical Wasserstein-2 distances. Here we present two examples for our two approximation methods.
		
		Firstly, in one dimensional sample space, consider example \ref{exp1}. Here the model becomes, 
		\begin{equation*}
		\rho(\theta, x)=\alpha \delta_{a}(x)+(1-\alpha) \delta_b(x),\quad \textrm{where}\quad \theta=(a,b).
		\end{equation*}
		In this case, our model is two dimensional, in which our metric in Theorem \ref{1d} is a constant matrix, e.g.  
		\begin{equation*}
		G_W(\theta)=\begin{pmatrix} 
		\alpha & 0\\
		0  & 1-\alpha
		\end{pmatrix}.
		\end{equation*}
		Hence the distance is 
		\begin{equation*}
		\textrm{Dist}(\theta_0,\theta_1)^2=\alpha\|a_0-a_1\|^2+(1-\alpha)\|b_0-b_1\|^2=\tilde{D}(\theta_0,\theta_1)^2.
		\end{equation*}
		Secondly, we can consider a Gaussian model, where $\rho(\theta, x)$ lies in a Gaussian distribution. After some direct calculations, we can observe that the order two affine approximation of the metric is also exact. }
\end{remark}
\begin{remark}
	{
		In general, the proposed metric is not exactly the Wassertein-2 metric within probability models. This is because the potential $\Phi$ needs to be solved by the constrained continuity equation, i.e. 
		\begin{equation}\label{ss}
		\nabla_\theta\rho(\theta, x)=-\nabla\cdot(\rho(\theta,x)\nabla\Phi).
		\end{equation}
		This equation can also be written into a weak form in generative models. The more accurate approximation methods for solving equation \eqref{ss} are left for future works. For example, \cite{2020arXiv200211309L} applies the another neural network for approximating \eqref{ss} and further solves the related gradient flows. }
\end{remark}

\begin{proof}[Proof of Theorem \ref{thm6}]
	Here we only present the second order expansion of $\tilde D$. 
	By Taylor expansion, we simply check that 
	\begin{equation*}
	\tilde{D}(\theta, \theta+h)=h^\top G(\theta) h +o(h^2), 
	\end{equation*}
	where
	\begin{equation*}
	\begin{split}
	\tilde{G}(\theta)_{ij}=&
	\Big\langle\mathbb{E}_{Z\sim p}\Psi(g(\theta,Z))\nabla_{\theta_i}g(\theta, z), M(\theta)\mathbb{E}_{Z\sim p}\Psi(g(\theta,Z))\nabla_{\theta_j}g(\theta, Z)\Big\rangle, 
	\end{split}
	\end{equation*}
	which is positive semi-definite. 
	Similar as the proof of Proposition~\ref{prop3}, we know that the algorithm has the update 
	\begin{equation*}
	\theta^{k+1}=\theta^k-hG(\tilde\theta)^{\dd}\nabla_\theta F(\theta^{k+1}) +o(h).
	\end{equation*}
	This is the first order time discretization of the gradient flow. 
	We next check that 
	\begin{equation*}
	\frac{d}{dt}F(\theta(t))=-\nabla_\theta F(\theta)^\top \tilde G(\theta)^{\dd} \nabla_\theta F(\theta)\leq 0. 
	\end{equation*}
	We observe that $F(\theta)$ decreases along the gradient flow. 
	This finishes the proof. 
	\qed
\end{proof}

Theorem~\ref{thm6} implies that the Wasserstein natural proximal methods that we developed in the previous sections, have the expected properties of natural (Riemannian) gradient flows, including parametrization invariance. 
We note that with the approximation, $G$ might not always be strictly positive definite, possibly introducing more critical points to the flow. 
This is a general phenomenon in gradient optimization with approximation and can be addressed by a variety of simple methods, such as the Levenberg-Marquard modification~\citep{chong2013introduction}, which simply adds $\lambda I$ with some $\lambda>0$. 

The Wasserstein metric in probability models 
can lead to different convergence rates and convergence regions than the Euclidean metric. 
Here the convergence region depends on the constrained Wasserstein metric within probability models. We will demonstrate the advantages of the method in the following experiments. 

\section{Experiments}
\label{section4}
\label{others}

We present numerical experiments evaluating the Relaxed Wasserstein Proximal (RWP) and Semi-Backward Euler (SBE) methods in the optimization of GANs. 
We find that our methods provide both better speed (measured by wallclock) and stability compared to regular gradient methods. 

\subsection{Experimental setup}
\label{resultsrwp}

The RWP, O1-SBE, and O2Diag-SBE algorithms are intended to be an easy-to-implement, drop-in replacement to improve speed and convergence of GAN training. 
These methods apply regularization on the generator updates during training. This stands in contrast to most GAN training methods, which regularize the discriminator, e.g., by a gradient penalty \citep{IMGAN,RWGAN,kodali2018on,2018arXiv180606621A,miyato2018spectral}. 
There has been limited exploration in regularizing the generator \citep{NIPS2016_6399}. 

Following Line 7 of Algorithm~\ref{alg:genprox}, for each update of the discriminator we update the generator $\ell$ times by 
\begin{align*}
\theta \leftarrow \text{Optimizer}_{\theta} \bigg( \text{Original loss} +  \frac{1}{2h} \tilde{D}(\theta,\theta^k) \bigg), 
\end{align*}
where $\tilde{D}(\theta, \theta^k)$ is one of the distances from Section~\ref{subsec:algorithms}. 
Here two hyperparameters are introduced: the proximal step-size $h$, and the number of iterations $\ell$. 
One may update the discriminator a number of times and then update the generator a number of times, and repeat; we call one loop of this update an {\em outer-iteration}. 
A more detailed description of the algorithm is given in Appendix~\ref{sec:rwp_detailed}. 
We test our methods 
on three types of GAN: 
Vanilla GANs \citep{OGAN} (Jenson-Shannon), 
WGAN-GP \citep{IMGAN}, 
and DRAGAN \citep{kodali2018on}.

\paragraph{\bf Neural network architectures and hyperparameter settings.} 
We utilize the DCGAN \citep{DBLP:journals/corr/RadfordMC15} architecture for the discriminator and generator. Specifically, the discriminator has convolutional and batch-norm layers with LeakyReLU activations, and a sigmoid output activation. The generator uses deconvolutional and batch-norm layers with ReLU activations, with a tanh output activation. 
Since we are testing our method as a drop-in regularizer, the hyperparameters (excluding $h$ and $\ell$) are chosen to work well before applying our regularization. 
The specific values that we used are provided in Appendix~\ref{sec:hp_for_rwp}. 

\paragraph{\bf Datasets.} 
We use the CIFAR-10 dataset \citep{Krizhevsky09learningmultiple}, and the aligned and cropped CelebA dataset \citep{liu2015faceattributes}. The CIFAR-10 dataset consists of 60,000 full-color images of size $32\times 32$. Each image belongs to one of ten classes: airplane, automobile, bird, cat, deer, dog, frog, horse, ship, truck. The aligned and cropped CelebA dataset contains 202,599 images of (Western) celebrity faces, and they are aligned so that all the faces are in the same position, and they are cropped to size $64\times 64$ in our experiments. 

\paragraph{\bf Quality measure.} 
Measuring the quality of a generative model for natural images is an open problem. 
Several methods have been proposed. The current state of the art method is the Fr\'echet Inception Distance (FID) \citep{NIPS2017_7240}. 
The FID computes the distance between two image distributions using Google's pre-trained Inception-v3 network. 
It examines the difference in activations of the pool3 layer of one image distribution compared to another. 
More precisely, it computes the mean and variance of the 2048-dimensional pool3 layer activations for a batch of real images and a batch of generated images, 
and then computes the Fr\'echet Distance (also known as the Wasserstein distance for Gaussians) for these means and variances. 
We note that when optimizing GANs, we are not directly optimizing the FID, and it is merely a secondary/after-the-fact measure for the quality of samples from the generator. 
We employ the FID both to measure performance and to measure convergence of GAN training (lower FID is better); we use 10,000 generated images to measure the FID. For CIFAR-10, we measure the FID every 1000 outer-iterations. 

\paragraph{\bf Latent space walk.} 
\cite{DBLP:journals/corr/RadfordMC15} suggest that walking in the latent space of an implicit generative model could given an indication of how well the model is doing at generalizing the training data. 
A latent space walk consists of sampling two points from the latent space, $z_1$ and $z_2$, and then generating images from the linear interpolation of these points. If the generator is generalizing well, then we should observe a gradual transition between images.

\paragraph{\bf Time to convergence.} 
Since our methods 
perform multiple generator iterations for each discriminator iteration, we compare against other methods not in terms of iterations, but in terms of 
wallclock time (this procedure was also used by \citep{NIPS2017_7240}).

\subsection{Results on the CIFAR-10 dataset}

Figure \ref{fig:cifar-10_comparison} shows that our regularizers 
improve the speed of convergence on CIFAR-10. 
In the case of DRAGAN, 
our regularizers greatly improve stability in the sense of less oscillations in FID values, and achieves lower FID values. 
In the case of WGAN-GP our SBE methods can reduce the FID about six times faster than the regular gradient. 
Overall, we found that the fastest method to train CIFAR-10 was Vanilla GAN with O1-SBE or RWP. 
For O2Diag-SBE, we obtained excellent performance without trying many different hyperparameter values. A different choice of $h$ and $\ell$ values might improve wallclock time.  

\begin{figure*}[h]
	\centering
	\includegraphics[width=.32\textwidth]{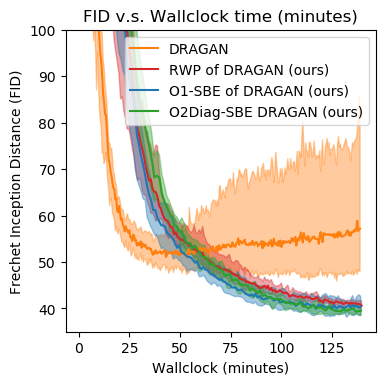}
	\includegraphics[width=.32\textwidth]{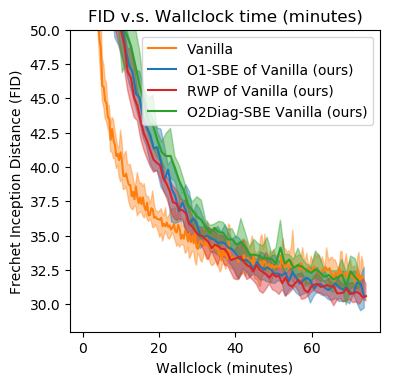}
	\includegraphics[width=.32\textwidth]{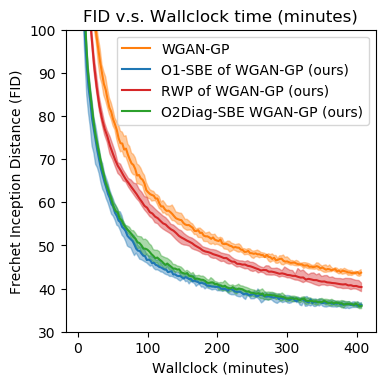}
	\caption{The effect of using RWP, O1-SBE, and O2Diag-SBE regularization on the CIFAR-10 dataset. The experiments are averaged over 5 runs. The bold lines are the average, and the enveloping lines are the minimum and maximum. From the three graphs, we see that using the easy-to-implement RWP, O1-SBE, O2Diag-SBE regularizations all improve speed as measured by wallclock time, and it also can achieve a lower FID. 
	}
	\label{fig:cifar-10_comparison}
\end{figure*}

In the appendix, Figure~\ref{app:fig:64_sample_cfiar10_wgangp_rwp} shows samples generated from WGAN-GP with RWP regularization, trained on the CIFAR-10 dataset. The FID for these images is 38.3. 
We also performed latent space walks \citep{DBLP:journals/corr/RadfordMC15} to show RWP regularization does not cause the GAN to memorize. 
In the appendix Figure~\ref{app:fig:cifar10_wgangp_rwp} we see that the images obtained from such trajectories have smooth transitions, indicating that GANs with RWP regularization generalize well. 
Order 1 SBE, and Order 2 Diagonal SBE showed similar results. 

\FloatBarrier

\subsection{Results on the CelebA dataset}

The top row of Figure~\ref{fig:celeba_standard_rwp} shows our results on the CelebA dataset. 
For this dataset we only examine 
the Vanilla and WGAN-GP GANs, as these are the two most popular frameworks. 
For Vanilla GANs, we see that RWP, O1-SBE, and O2Diag-SBE improve the speed of GAN training according to wallclock time, and they also achieve a slightly lower FID. 
In the case of WGAN-GP, adding our regularizers does not improve nor harm time or performance. 
Overall, the fastest method to train CelebA 
is Vanilla GAN with O2Diag-SBE regularization.

\begin{figure}[h]
	\centering
	\includegraphics[width=4cm]{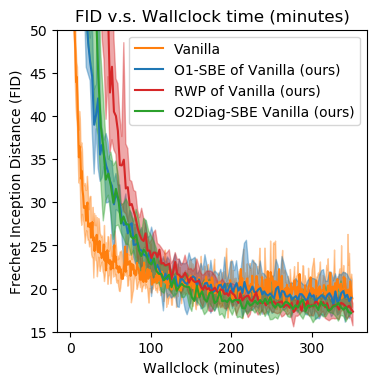}
	\includegraphics[width=4cm]{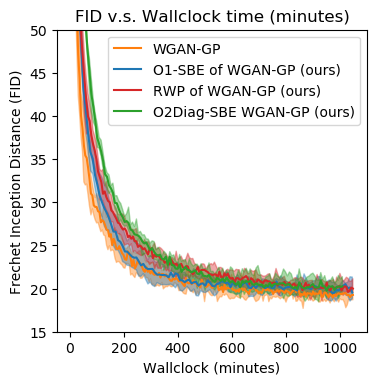}
	
	\includegraphics[width=4cm]{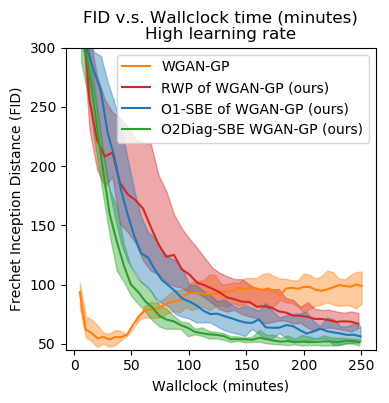}
	\includegraphics[width=4cm]{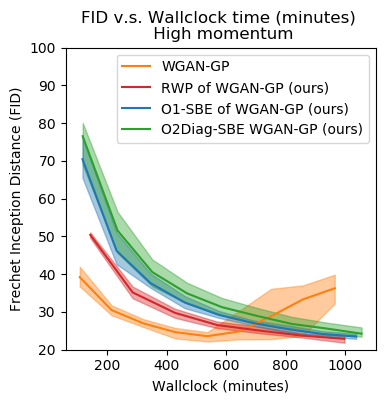}
	\includegraphics[width=4cm]{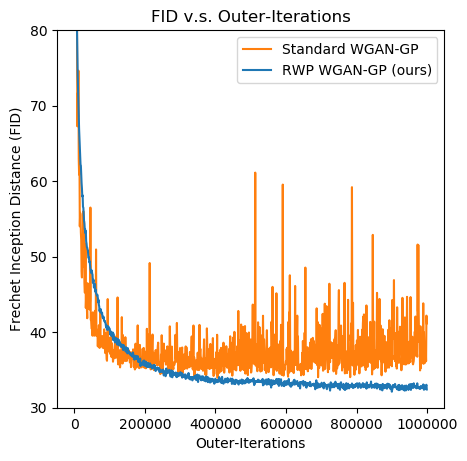} 
	
	\caption{
		\textbf{(Top Row):} The effect of using RWP, O1-SBE, and O2Diag-SBE regularization for training Vanilla GANs and WGAN-GP on the CelebA dataset. The experiment was averaged over 5 runs. 
		The bold lines are the average, and the enveloping lines are the minimum and maximum. 
		\textbf{(Bottom Row):} The \textbf{left} panel shows the FID values over training time for a high learning rate. The \textbf{middle} panel shows the FID values over training time for a high momentum value. RWP, O1-SBE, and O2Diag-SBE improve training by providing a lower FID when the learning rate or momentum is high. In the \textbf{right} panel, we show experiment demonstrating the effect of performing 10 generator iterations per outer-iteration with and without RWP. With RWP regularization we obtain convergence and a lower FID.  Without RWP, training is highly variable and the FID is even on a rising trend in the end.
	}
	\label{fig:celeba_standard_rwp}
	\label{fig:celeba_wgangp_rwp}
\end{figure}

In the appendix, Figure~\ref{app:fig:64_sample_celeba} shows samples generated from a Vanilla GAN with RWP regularization, trained on the CelebA dataset. The FID here is 17.105. 
In the appendix, Figure~\ref{app:fig:latent_space_walk} shows latent space walks.

\subsection{Stability}

In the bottom row of Figure \ref{fig:celeba_standard_rwp}, we see that adding our regularizers actually improves the stability of WGAN-GP under higher learning rates ($0.002$ vs.~$0.0001$, 20 times larger) and higher momentum (Adam $\beta_1=0.5$ vs.~$0$). For a high learning rate, we see without regularization, WGAN-GP first reaches an FID of around 50, but then destabilizes to an FID of around 100. For a higher momentum, the behavior is similar in that without regularization, the FID first decreases, but then increases. When applying regularization, the values decrease in a more monotonic fashion and tend to stay low. 

In the right of the bottom row of Figure \ref{fig:celeba_standard_rwp}, we optimize with 10 generator iterations per outer-iteration with and without RWP regularization. 
Without regularization, the FID varies much more wildly (and even starts rising near the end), 
but with RWP regularization, the FID values are stable. 

\FloatBarrier

\section{Related works}
\label{headings}

\paragraph{Wasserstein loss function.} Several works utilize the Wasserstein distance as a training objective \citep{LWL,Boltzman}, and in GANs \citep{WGAN,IMGAN,RWGAN}. 
The Wasserstein distance
introduces a statistical estimator called the minimum Wasserstein estimator \citep{BASSETTI20061298}, which depends on the geometry of the data space. 
Recently, 
a Wasserstein ground metric was proposed~\citep{pmlr-v97-dukler19a}, which leads to a Wasserstein Lipschitz condition for the dual variable. 
In contrast to these works, here we apply the Wasserstein-2 distance to construct gradient operators for the optimization of GANs. This results in an iterative regularizer for the generator. 

\paragraph{Wasserstein gradient flows.} 
The Wasserstein-2 metric provides a metric tensor structure \citep{Lott,otto2001,LiG,Lafferty}. The gradient flow in the density manifold links with many transport-related partial differential equations \citep{vil2008,Nelson2}, such as the Fokker-Planck equation. 
There are two perspective: depending on approach of parametric \citep{LAND} or nonparametric models \citep{Stein}.
And in \cite{WGused} consider an approximate inference method for computing the Wasserstein gradient flow. Here an approximation of Kantorovich dual variables is introduced.  Compared to these works, we consider Wasserstein structure constrained on parameter space. There have been many approaches in this direction \citep{C2,GW,WM}. 
Compared to previous works, our approach applies the Wasserstein gradient to work on implicit generative models. 

\paragraph{Wasserstein proximal operator.} 

In full probability space with Wasserstein-2 distance, the proximal iteration or backward Euler method is known as the Jordan-Kinderlehrer-Otto (JKO) scheme \citep{JKO}. 
Many numerical methods have been proposed in this direction \citep{1809.10844}. 
We consider the backward and semi-backward Euler method on parameter space. 
Similar approaches have been considered by \cite{7558232}. We further approximate the Wasserstein proximal in affine function space. This yields a tractable iterative regularization term depending on statistics of the generators. 
Closely related to this article, 
~\cite{li2019affine} presented a proximal formulation of the Wasserstein natural gradient with the proximity term approximated over an affine subspace of functions in the Legendre dual formulation, but we focus on GANs.

\section{Discussion}
\label{sec:discussion}

We have developed approaches to practically implement the Wasserstein natural gradient method in the context of implicit deep generative models, which provide better minimizers, faster convergence in wall-clock time, and better stability. 
We consider a proximal method and obtain explicit formulas for the proximity term expressed in terms of statistics of the generated samples. 
Our method can be implemented at little to no additional cost over current methods. A novelty of our approach is that we regularize the generator, whereas much of the present work focuses on regularizing the discriminator. 

{
	Here we also notice that our methods brutally approximate the Wasserstein-2 metric and the gradient flows in generative models. To perform scientific computing of Wasserstein gradient flows, the efficacy of these approximations should be studied carefully. We leave these related data-driven, scientific-computing problems for future works. 
}

\subsubsection*{Acknowledgments} 
A. Lin, W. Li and S. Osher were supported by AFOSR MURI FA 9550-18-1-0502, AFOSR FA 9550-18-0167, ONR N00014-18-2527 and NSF DMS 1554564 (STROBE). G. Mont\'ufar has received funding from the European Research Council (ERC) under the European Union's Horizon 2020 research and innovation programme (grant agreement n\textsuperscript{o} 757983). 

\newpage 
\appendix

\section{Review of Wasserstein Information Geometry}

In this section, we briefly review the geometry of $L^2$-Wasserstein metric tensor in the probability set and probability models. For more details see \cite{LM}. 

Consider the set of probability densities with finite second moment, $\mathbb{P}_2(\mathbb{R}^n)$. 
Moreover, consider a metric function $W_2\colon \mathcal{P}_2(\mathbb{R}^n)\times \mathcal{P}_2(\mathbb{R}^n)\rightarrow \mathbb{R}_+$,
\begin{equation}
\begin{split}
W_2(\rho_0,\rho_1)^2=\inf_{\Phi_t}&\Big\{\int_0^1\int_{\mathbb{R}^n}\|\nabla\Phi(t,x)\|^2\rho(t,x)dxdt \colon \\
&\partial_t\rho(t,x)+\nabla\cdot(\rho(t,x)\nabla\Phi(t,x))=0, \\
&\rho(0,x)=\rho_0(x),~\rho(1,x)=\rho_1(x)\Big\},
\end{split}
\label{eq:W2}
\end{equation}
where the infimum is taken among all feasible Borel potential functions $\Phi\colon [0,1]\times \mathbb{R}^{n}\rightarrow \mathbb{R}$ and continuous density path $\rho\colon [0,1]\times \mathbb{R}^n\rightarrow \mathbb{R}_+$ satisfying the continuity equation. 
The variational formulation of (\ref{eq:W2}) introduces a Riemannian structure in density space. 
Consider the set of smooth and strictly positive probability densities
\begin{equation*}
\begin{split}
\mathcal{P}_+ &= \Big\{\rho \in C^{\infty}(\mathbb{R}^n)\colon \rho(x)>0,~\int_{\mathbb{R}^n}\rho(x)dx=1\Big\}  \subset \mathcal{P}_2(\mathbb{R}^n).
\end{split}
\end{equation*}
Writing $\mathcal{F}:=C^{\infty}(\mathbb{R}^n)$ for the set of smooth real valued functions, the tangent space of $\mathcal{P}_+$ is given by 
$$
T_\rho\mathcal{P}_+ = \Big\{\sigma\in \mathcal{F}\colon \int_{\mathbb{R}^n}\sigma(x) dx=0 \Big\}.
$$
Given $\Phi\in \mathcal{F}$ and $\rho\in \mathcal{P}_+$, define
\begin{equation*}
V_{\Phi}(x):=-\nabla\cdot(\rho(x) \nabla \Phi(x)).
\end{equation*}
Thus $V_\Phi\in T_{\rho}\mathcal{P}_+$. 
The elliptic operator $\nabla\cdot(\rho\nabla)$ identifies the function $\Phi$ modulo additive constants with the tangent vector $V_{\Phi}$ of the space of densities. 
Given $\rho\in \mathcal{P}_+$, $\sigma_i\in T_\rho\mathcal{P}_+$, $i=1,2$, define
\begin{equation*}
g^W_\rho(\sigma_1,\sigma_2)=\int_{\mathbb{R}^n}(\nabla\Phi_1(x), \nabla{\Phi_2}(x))\rho(x) dx,
\end{equation*}
where $\Phi_i(x)\in \mathcal{F}/\mathbb{R}$, such that $-\nabla\cdot(\rho \nabla\Phi_i)=\sigma_i$.  It we write $\Phi_i=-(\Delta_\rho)^{-1}\sigma_i$, then 
\begin{equation*}
g^W_\rho(\sigma_1,\sigma_2)=\int_{\mathbb{R}^n}\Big(\sigma_1(x), (-\Delta_\rho)^{-1} \sigma_2(x)\Big)dx.
\end{equation*}
The inner product $g^W$ endows $\mathcal{P}_+$ with a Riemannian metric tensor. In other words, the variational problem~\eqref{eq:W2} is a geometric action energy in $(\mathcal{P}_+, g^W)$. 

Given a loss function $F\colon \mathcal{P}_+\rightarrow \mathbb{R}$, the Wasserstein gradient operator in $(\mathcal{P}_+, g^W)$ is given by 
\begin{equation*}
\begin{split}
\textrm{grad}_W F(\rho)=&\Big((-\Delta_\rho)^{-1}\Big)^{-1}\frac{\delta}{\delta\rho(x)}F(\rho)\\
=&-\nabla\cdot(\rho\nabla \frac{\delta}{\delta\rho(x)}F(\rho)).
\end{split}
\end{equation*}
Thus the gradient flow satisfies \begin{equation*}
\frac{\partial\rho}{\partial t}=-\textrm{grad}_WF(\rho)= \nabla\cdot(\rho\nabla  \frac{\delta}{\delta\rho(x)} F(\rho)).
\end{equation*}
More analytical results on the Wasserstein-2 gradient flow have been discussed by \cite{Ambrosio2008Gradient}. 

We next consider Wasserstein-2 metric and gradient operator constrained on statistical models. 
A statistical model is defined by a triplet $(\Theta,  \mathbb{R}^n, \rho)$. 
For simplicity, let 
$\Theta\subset\mathbb{R}^d$ and let $\rho\colon \Theta\rightarrow \mathcal{P}( \mathbb{R}^n)$ be a parameterization function. 
We assume that the parameterization map $\rho$ is locally injective and satisfies suitable regularity conditions. 
We define a Riemannian metric $g$ on $\rho(\Theta)$ by pulling back the Wasserstein-2 metric tensor $g^W$. 
\begin{definition}[Wasserstein statistical manifold]\label{WSM}
	Given $\theta\in\Theta$ and $\dot\theta_i\in T_{\theta}\Theta$, $i=1,2$, we define 
	\begin{equation*}
	g_{\theta}(\dot\theta_1,\dot\theta_2)=\int_{\mathbb{R}^n} \Big((\dot\theta_1, \nabla_\theta\rho), (-\Delta_\rho)^{-1}(\nabla_\theta\rho, \dot\theta_2)\Big) dx. 
	\end{equation*}
	Equivalently, 
	\begin{equation*}
	g_{\theta}(\dot\theta_1,\dot\theta_2)=\int_{\mathbb{R}^n} \nabla \Phi_1(x)\nabla\Phi_2(x)\rho(\theta, x)dx,
	\end{equation*}
	where 
	\begin{equation*}
	-\nabla\cdot(\rho(\theta, x)\nabla\Phi_i(x))=(\nabla_\theta\rho(\theta ,x), \dot\theta_i).
	\end{equation*}
	Here $\nabla_\theta\rho=(\frac{\partial}{\partial\theta_i}\rho(\theta,x))_{i=1}^d\in\mathbb{R}^d$ and $(\cdot,\cdot)$ is an Euclidean inner product in $\mathbb{R}^d$.
\end{definition}
In particular, we have 
\begin{equation*}
g_\theta(\dot\theta_1, \dot\theta_2)=\dot\theta_1^\top G(\theta)\dot\theta_2,
\end{equation*}
where $G(\theta)=(G(\theta)_{ij})_{1\leq i,j\leq d}\in\mathbb{R}^{d\times d}$ is the associated metric tensor defined in Theorem \ref{prop2}. 
Thus the distance function can be written in terms of the geometry action functional
\begin{equation}\label{wmetric}
\begin{split}
& \textrm{Dist}(\theta, \theta^k)^2\\
= &\inf\Big\{\int_0^1 \dot\theta(t)^\top G(\theta(t))\dot\theta(t) dt\colon \theta(0)=\theta,~\theta(1)=\theta^k\Big\}\\
= & \inf \Big\{\int_0^1\int_{\mathbb{R}^n}(\partial_t\rho(\theta(t),x),\mathcal{G}(\rho_\theta)\partial_t\rho(\theta(t),x))dxdt\colon  \theta(0)=\theta,~\theta(1)=\theta^k\Big\}\\
= &\inf \Big\{\int_0^1\int_{\mathbb{R}^n}\|\nabla\Phi(t, x)\|^2\rho(\theta(t), x) dxdt \colon\\ 
& \hspace{1cm} \partial_t\rho(\theta(t),x)+\nabla\cdot(\rho(\theta(t),x)\nabla\Phi(t,x))=0, \theta(0)=\theta,~\theta(1)=\theta^k\Big\}.
\end{split}
\end{equation}



\section{Proofs of theorems}

\begin{proposition}[Semi-backward Euler method (Proposition \ref{prop3})]
	\label{app:prop3}
	The iteration 
	\begin{equation}\label{new}
	\theta^{k+1}=\arg\min_{\theta} F(\theta)+\frac{\tilde{D}(\theta, \theta^k)^2}{2h},
	\end{equation}
	with 
	\begin{equation*}
	\tilde{D}(\theta, \theta^k)^2=\int_{\mathbb{R}^n}\Big( \rho_\theta-\rho_{\theta^k} , \mathcal{G}(\rho_{\tilde\theta})(\rho_\theta-\rho_{\theta^k})\Big)dx,
	\end{equation*}
	and $\tilde\theta=\frac{\theta+\theta^k}{2}$, is a consistent time discretization of the Wassserstein natural gradient flow~(\ref{GD}).
\end{proposition}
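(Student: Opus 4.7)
The plan is to derive the first-order optimality condition of the proximal update and show that, to leading order in $h$, it reduces to a semi-implicit Euler step for the gradient flow (\ref{GD}) with the Wasserstein metric evaluated near $\theta^k$.

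First I would Taylor expand $\rho_\theta-\rho_{\theta^k}$ around the midpoint $\tilde\theta=(\theta+\theta^k)/2$. Since $\theta-\tilde\theta=-(\theta^k-\tilde\theta)=(\theta-\theta^k)/2$, the quadratic-in-parameter terms agree on the two sides and cancel when subtracted, so
$$\rho_\theta-\rho_{\theta^k}=\nabla_\theta\rho(\tilde\theta,x)\cdot(\theta-\theta^k)+O(|\theta-\theta^k|^3).$$
Substituting into the definition of $\tilde D$ and using that $\mathcal{G}(\rho_{\tilde\theta})=(-\Delta_{\rho_{\tilde\theta}})^{-1}$ yields
$$\tilde D(\theta,\theta^k)^2=(\theta-\theta^k)^\top G(\tilde\theta)(\theta-\theta^k)+O(|\theta-\theta^k|^4),$$
where $G(\tilde\theta)$ is exactly the Wasserstein metric tensor from Definition \ref{prop2} evaluated at the midpoint.

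Next I would compute the stationarity condition for (\ref{new}). When differentiating $\tilde D^2$ with respect to $\theta$, the implicit dependence of $\tilde\theta$ on $\theta$ produces contributions from $\nabla_\theta[\mathcal{G}(\rho_{\tilde\theta})]$ multiplied by the factor $(\rho_\theta-\rho_{\theta^k})^2=O(|\theta-\theta^k|^2)$, so these are higher order and can be absorbed into the remainder. The resulting optimality condition reads
$$\nabla_\theta F(\theta^{k+1})+\frac{1}{h}G(\tilde\theta)(\theta^{k+1}-\theta^k)=o(1).$$
Since $G(\theta)$ is positive definite, inverting gives
$$\theta^{k+1}=\theta^k-h\,G(\tilde\theta)^{-1}\nabla_\theta F(\theta^{k+1})+o(h),$$
which is the form asserted just after the statement of Proposition \ref{prop3}. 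Because $\tilde\theta=\theta^k+O(h)$ and $\nabla_\theta F(\theta^{k+1})=\nabla_\theta F(\theta^k)+O(h)$, this further reduces to $\theta^{k+1}=\theta^k-hG(\theta^k)^{-1}\nabla_\theta F(\theta^k)+o(h)$, a first-order consistent time discretization of the Wasserstein natural gradient flow (\ref{GD}).

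The main obstacle I expect is verifying that the midpoint structure really suppresses the would-be leading-order corrections. Two points need careful bookkeeping: (i) the cancellation of quadratic-in-$(\theta-\theta^k)$ terms in the Taylor expansion of $\rho_\theta-\rho_{\theta^k}$ around $\tilde\theta$, so that the leading-order quadratic form is genuinely $(\theta-\theta^k)^\top G(\tilde\theta)(\theta-\theta^k)$; and (ii) controlling the contribution from differentiating $\mathcal{G}(\rho_{\tilde\theta})$ through $\tilde\theta$ when computing $\nabla_\theta\tilde D^2$, which requires that the inverse Laplacian operator depends smoothly on $\theta$ near $\theta^k$. Both reduce to standard midpoint-rule estimates, but they must be made uniform in a neighborhood of $\theta^k$ so that the $o(1)$ and $o(h)$ remainders are genuine. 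Once this is in place, identifying the quadratic form with $G(\tilde\theta)$ and reading off consistency from the resulting implicit Euler relation is immediate.
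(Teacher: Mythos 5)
Your argument is correct, and its computational core coincides with the paper's: both proofs reduce to showing that $\tilde{D}(\theta,\theta^k)^2=(\theta-\theta^k)^\top G(\tilde\theta)\,(\theta-\theta^k)+o(h^2)$ and then reading off the semi-implicit relation $\theta^{k+1}=\theta^k-hG(\tilde\theta)^{-1}\nabla_\theta F(\theta^{k+1})+o(h)$ from the first-order optimality condition. The route differs in two respects. First, you work directly with the primal definition $\tilde D^2=\int(\rho_\theta-\rho_{\theta^k},\mathcal{G}(\rho_{\tilde\theta})(\rho_\theta-\rho_{\theta^k}))\,dx$ and Taylor-expand $\rho_\theta-\rho_{\theta^k}$ about the midpoint, exploiting the symmetric cancellation of the quadratic terms to get an $O(h^3)$ remainder; the paper instead passes to the dual variational form $\sup_\Phi\int\Phi(\rho_\theta-\rho_{\theta^k})-\tfrac12\|\nabla\Phi\|^2\rho_{\tilde\theta}\,dx$, identifies the maximizer $\Phi^\ast$ through the elliptic equation $-\nabla\cdot(\rho_{\tilde\theta}\nabla\Phi^\ast)=\rho_\theta-\rho_{\theta^k}$, and evaluates the supremum as $\tfrac12\int\|\nabla\Phi^\ast\|^2\rho_{\tilde\theta}\,dx$. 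These are the same computation seen from the two sides of the Legendre duality; your primal version is more elementary, while the dual version is the one that later feeds into the affine approximation of Theorem~\ref{thm4}. Second, the paper additionally proves, via reparameterization of the geodesic, that the exact Riemannian distance satisfies $\operatorname{Dist}(\theta,\theta^k)^2=(\theta-\theta^k)^\top G(\tilde\theta)(\theta-\theta^k)+o(h^2)$, which shows that the SBE penalty agrees with the true backward-Euler penalty to the relevant order; you omit this step, which is admissible since the proposition only asserts consistency with the gradient flow~(\ref{GD}), but it is what justifies the name ``semi-backward Euler.'' Your two flagged bookkeeping points are the right ones; note also that both your argument and the paper's implicitly use $\|\theta^{k+1}-\theta^k\|=O(h)$ (forced by the $\tfrac{1}{2h}$ scaling of the penalty) so that the $\frac{1}{h}O(\|\theta^{k+1}-\theta^k\|^2)$ correction from differentiating $G(\tilde\theta)$ is genuinely $o(1)$ in the stationarity equation.
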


\begin{proof}
	We claim that if $\|\theta-\theta^k\|=h$, then 
	\begin{equation}
	\label{claim0}
	(\theta^k-\theta)^\top G(\tilde\theta)(\theta^k-\theta)=\textrm{Dist}(\theta, \theta^k)^2+o(h^2),
	\end{equation}
	and
	\begin{equation}
	\label{claim}
	\begin{split}
	&\frac{1}{2}(\theta^k-\theta)^\top G(\tilde\theta)(\theta^k-\theta)+o(h^2)\\
	=& \sup_{\Phi}\int_{\mathbb{R}^n} \Phi(x)(\rho(\theta, x)-\rho(\theta^k, x)) - \frac{1}{2}\|\nabla\Phi(x)\|^2\rho(\tilde\theta,x) dx. 
	\end{split}
	\end{equation}
	We proceed with the proof of this claim. 
	Consider the geodesic path $\theta^\ast (t)$, $t\in[0,1]$, with $\theta^\ast (0)=\theta$, $\theta^\ast (1)=\theta^k$, s.t. 
	\begin{equation*}
	\textrm{Dist}(\theta, \theta^k)^2=\int_0^1 (\frac{d}{dt}\theta^\ast (t))^\top  G(\theta^\ast (t))\frac{d}{dt}\theta^\ast (t)dt.
	\end{equation*}
	We reparameterize the time of $\theta^\ast (t)$ into the time interval $[0, h]$. 
	Let $\tau=ht$ and $\theta(\tau)=\theta^\ast (ht)$. 
	Then $\theta(\tau)=\theta^k+\frac{\theta-\theta^k}{h} \tau+O(\tau^2)$ and 
	$\frac{d}{d\tau}\theta(\tau)=\frac{\theta-\theta^k}{h}+O(\tau)$, so that 
	\begin{equation*}
	\begin{split}
	\textrm{Dist}(\theta, \theta^k)^2 
	=&h\int_0^h \frac{d}{d\tau}\theta(\tau)^\top  G(\theta(\tau)) \frac{d}{d\tau}\theta(\tau) d\tau\\
	=&h\int_0^h (\frac{\theta-\theta^k}{h}+O(h))^\top G(\tilde\theta+O(h))(\frac{\theta-\theta^k}{h}+O(h)) d\tau \\
	=&(\theta-\theta^k)^\top G(\tilde\theta)(\theta-\theta^k)+o(h^2). 
	\end{split}
	\end{equation*}
	This proves equation~(\ref{claim0}). 
	We next prove equation~(\ref{claim}). 
	On the L.H.S.~of equation~(\ref{claim}), 
	\begin{equation*}
	\nabla_\theta\rho(\tilde\theta, x)(\theta-\theta^k)=\rho(\theta, x)-\rho(\theta^k,x)+o(h).
	\end{equation*}
	From the definition of $G(\theta)$, 
	\begin{align*}
	\frac{1}{2}(\theta-\theta^k)^\top G(\tilde\theta)(\theta-\theta^k) =\frac{1}{2}\int_{\mathbb{R}^n}\|\nabla\Phi(x)\|^2\rho(\tilde\theta, x)\,dx+o(h^2), 
	\end{align*}
	where 
	\begin{align*}
	-\nabla\cdot(\rho(\tilde\theta, x)\nabla\Phi(x)) &=\nabla_\theta\rho(\tilde\theta, x)(\theta-\theta^k).
	\end{align*}
	On the R.H.S.~of equation~(\ref{claim}), the maximizer $\Phi^\ast $ satisfies 
	\begin{equation}\label{c2}
	\rho(\theta, x)-\rho(\theta^k, x)+\nabla\cdot(\rho(\tilde\theta,x)\nabla\Phi^\ast (x))=0. 
	\end{equation}
	Inserting equation~(\ref{c2}) into the R.H.S.~of~(\ref{claim}), we obtain 
	\begin{equation*}
	\begin{split}
	&\int_{\mathbb{R}^n} \Phi^\ast (x)(\rho(\theta, x)-\rho(\tilde\theta, x)) -\frac{1}{2}\|\nabla\Phi^\ast (x)\|^2\rho(\tilde\theta,x)\, dx\\
	=&\int_{\mathbb{R}^n}\Phi^\ast (x)[-\nabla\cdot(\rho(\tilde\theta, x)\nabla\Phi^\ast (x)] - \frac{1}{2}\|\nabla\Phi^\ast (x)\|^2\rho(\tilde\theta,x)\, dx \\
	=&\int_{\mathbb{R}^n}\|\nabla\Phi^\ast (x)\|^2\rho(\tilde\theta, x)-\frac{1}{2}\|\nabla\Phi^\ast (x)\|^2\rho(\tilde\theta, x)\, dx\\
	=&\frac{1}{2}\int_{\mathbb{R}^n}\|\nabla\Phi^\ast (x)\|^2\rho(\tilde\theta, x)\, dx.
	\end{split}
	\end{equation*}
	Comparing the left and right hand sides of (\ref{claim}) yields the claim. 
	This allows us to write 
	\begin{equation*}
	\begin{split}
	\theta^{k+1}
	&=\arg\min_{\theta\in \Theta} F(\theta)+ \frac{1}{h} \frac{\textrm{Dist}(\theta, \theta^k)^2}{2}\\
	&=\arg\min_{\theta\in \Theta} F(\theta)+ \frac{1}{2h}\Big\{(\theta^k-\theta)^\top G(\tilde\theta)(\theta^k-\theta)+o(h^2)\Big\}\\
	&=\arg\min_{\theta\in \Theta}F(\theta)+\frac{1}{h}\Big\{\sup_{\Phi}\int_{\mathbb{R}^n} \Phi(x)(\rho(\theta, x)-\rho(\theta^k, x))\\
	&\hspace{3.5cm} -\frac{1}{2}\|\nabla\Phi(x)\|^2\rho(\tilde\theta,x) dx+o(h^2)\Big\}.
	\end{split}
	\end{equation*}
	We notice that 
	\begin{equation*}
	\begin{split}
	(\theta^k-\theta)^\top G(\tilde\theta)(\theta^k-\theta)+o(h^2)
	=\int_{\mathbb{R}^n}(\rho_{\theta^k}-\rho_{\theta}), \mathcal{G}(\rho_{\tilde\theta})(\rho_{\theta^k}-\rho_\theta) dx.     
	\end{split}
	\end{equation*}
	Thus we derive a consistent numerical method in time, known as the Semi-backward Euler method: \begin{equation*}
	\theta^{k+1}=\theta^k-h G(\tilde\theta)^{-1}\nabla_\theta F(\theta^{k+1})+o(h).
	\end{equation*}
	\qed
\end{proof}


\begin{theorem}[Affine metric function $\tilde D$ (Theorem \ref{thm4})]
	Consider some $\Psi=(\psi_1,\ldots,\psi_K)^{\top}$ and 
	assume that $M(\theta)=(M_{ij}(\theta))_{1\leq i,j\leq K}\in \mathbb{R}^{K\times K}$ is a regular matrix with entries
	\begin{equation*}
	M_{ij}(\theta)=\mathbb{E}_{Z\sim p}\Big(\sum_{l=1}^n\partial_{x_l}\psi_i(g(\tilde\theta, Z)) \partial_{x_l}\psi_j(g(\tilde\theta, Z))  \Big),  
	\end{equation*}
	where $\tilde \theta=\frac{\theta+\theta^k}{2}$. 
	Then, 
	\begin{equation*}
	\begin{split}
	\tilde{D}(\theta, \theta^k)^2 = &\Big(\mathbb{E}_{Z\sim p}[\Psi(g(\theta, Z))-\Psi(g(\theta^k, Z))]\Big)^\top \\
	&M(\tilde \theta)^{-1}\Big(\mathbb{E}_{Z\sim p}[\Psi(g(\theta, Z))-\Psi(g(\theta^k, Z))]\Big). 
	\end{split}
	\end{equation*}
\end{theorem}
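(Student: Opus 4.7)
The plan is to substitute the affine ansatz $\Phi_\xi(x) = \xi^\top \Psi(x)$ directly into the variational formulation \eqref{eq:affinedef} for $\tilde D(\theta,\theta^k)^2$, and then reduce the infinite-dimensional supremum over $C^\infty(\mathbb{R}^n)$ (now restricted to the affine $K$-dimensional subspace $\mathrm{span}\{\psi_1,\ldots,\psi_K\}$) to an explicit concave quadratic program in $\xi \in \mathbb{R}^K$. Once the objective has this form, the maximizer is available in closed form through the standard first-order condition.

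First I would rewrite each of the two integrals in \eqref{eq:affinedef} as expectations over the latent variable $Z \sim p$ by pushing forward along the generator: since $\rho(\theta,\cdot)$ is the law of $g(\theta,Z)$, for any integrable $h$,
\begin{equation*}
\int_{\mathbb{R}^n} h(x)\rho(\theta,x)\,dx = \mathbb{E}_{Z\sim p}[h(g(\theta,Z))].
\end{equation*}
Applied to $h = \Phi_\xi$, the linear term becomes $\xi^\top \mathbb{E}_{Z\sim p}[\Psi(g(\theta,Z)) - \Psi(g(\theta^k,Z))] = \xi^\top \Delta$, where $\Delta$ is the vector appearing in the theorem statement. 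Applied to $h(x) = \|\nabla\Phi_\xi(x)\|^2 = \sum_{i,j,l} \xi_i \xi_j\, \partial_{x_l}\psi_i(x)\partial_{x_l}\psi_j(x)$ integrated against $\rho(\tilde\theta,x)$, the quadratic term becomes $\xi^\top M(\tilde\theta)\xi$, where $M(\tilde\theta)$ is precisely the matrix defined in the statement.

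Thus the supremum reduces to
\begin{equation*}
\tilde D(\theta,\theta^k)^2 = \sup_{\xi \in \mathbb{R}^K}\Bigl\{\xi^\top \Delta - \tfrac{1}{2}\xi^\top M(\tilde\theta)\xi\Bigr\}.
\end{equation*}
Since $M(\tilde\theta)$ is assumed regular (and is positive semidefinite by construction), this objective is concave in $\xi$, and the first-order condition $M(\tilde\theta)\xi = \Delta$ gives the unique maximizer $\xi^\ast = M(\tilde\theta)^{-1}\Delta$. Substituting yields the quadratic form $\Delta^\top M(\tilde\theta)^{-1}\Delta$ (up to the convention on the $\tfrac{1}{2}$ factor used throughout the paper's definition of $\tilde D$), which matches the claimed expression.

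The calculation is essentially routine once the pushforward identity is invoked. The only real point of care is the bookkeeping in the quadratic term: one must recognize that $\nabla \Phi_\xi(x) = \nabla\Psi(x)^\top \xi$ so that $\|\nabla\Phi_\xi(x)\|^2 = \xi^\top \bigl(\nabla\Psi(x)\nabla\Psi(x)^\top\bigr)\xi$ in a way that exactly matches the coordinatewise sum $\sum_l \partial_{x_l}\psi_i \partial_{x_l}\psi_j$ appearing in $M_{ij}(\tilde\theta)$. The regularity hypothesis on $M(\tilde\theta)$ is what legitimizes the explicit inversion; otherwise one would need a pseudoinverse, as is used later in Theorem~\ref{thm6}.
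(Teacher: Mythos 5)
Your proposal is correct and follows essentially the same route as the paper's own proof: substitute the affine ansatz, push the integrals forward to expectations over $Z\sim p$, reduce to the concave quadratic $\xi^\top\Delta-\tfrac{1}{2}\xi^\top M(\tilde\theta)\xi$, and solve the first-order condition $\xi^\ast=M(\tilde\theta)^{-1}\Delta$. Your parenthetical about the $\tfrac{1}{2}$ factor is well taken---the supremum actually evaluates to $\tfrac{1}{2}\Delta^\top M(\tilde\theta)^{-1}\Delta$, a constant the paper's proof also silently drops---but this does not affect the substance of the argument.
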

\begin{proof}
	The gradient of our function approximator w.r.t. the input space variable is 
	\begin{equation*}
	\nabla \Phi= (\sum_j \xi_j \partial_i\psi_j(x))_{i=1}^n. 
	\end{equation*}
	The squared norm of the gradient is 
	\begin{equation*}
	\begin{split}
	\| \nabla \Phi\|^2 
	=& \sum_i (\sum_j \xi_j \partial_i \psi_j(x))^2 
	= \sum_i \sum_j \xi_j \partial_i \psi_j \sum_k \xi_k \partial_i \psi_k \\
	=& \sum_j \sum_k \xi_j \xi_k (\sum_{i}\partial_i\psi_j(x)\partial_i\psi_k(x))\\
	=& \xi^\top C(x) \xi, 
	\end{split}
	\end{equation*}
	where $C_{ij}(x) = \sum_k\partial_k\psi_i\partial_k\psi_j$. 
	Now consider the distance 
	\begin{align*}
	\tilde{D}(\theta, \theta^k)^2
	=& \sup_{\Phi \in\mathcal{F}_\xi} \int_{\mathbb{R}^n} \Phi(\rho_\theta -\rho_{\theta^k})dx - \frac{1}{2} \int_{\mathbb{R}^n} (\nabla \Phi)^2 \rho_{\tilde \theta} dx \\
	=& \sup_{\xi} \xi^\top (\mathbb{E}_\theta \psi - \mathbb{E}_{\theta^k}\psi)  -\frac{1}{2} \xi^\top \mathbb{E}_{\tilde\theta} C \xi.
	\end{align*}
	Here $\mathbb{E}_{\tilde\theta}C$ is a positive semi-definite matrix. 
	Since for any $\xi\in \mathbb{R}^K$, we have
	\begin{equation*}
	\xi^\top \mathbb{E}_{\tilde\theta}C\xi=\int_{\mathbb{R}^n}\sum_i (\sum_j \xi_j \partial_i \psi_j(x))^2 \rho_{\theta}dx\geq 0.    
	\end{equation*}
	Under the assumption that $\mathbb{E}_\theta C$ is invertible, the optimization problem is strictly concave. At the maximizer, we have 
	\begin{equation*}
	\xi^\ast =(\mathbb{E}_{\tilde\theta} C)^{-1} (\mathbb{E}_\theta \psi -\mathbb{E}_{\theta^k} \psi).
	\end{equation*}
	Thus, 
	\begin{equation*}
	\tilde{D}(\theta, \theta^k)^2= (\mathbb{E}_\theta \psi -\mathbb{E}_{\theta^k} \psi)^\top (\mathbb{E}_{\tilde\theta} C)^{-1} (\mathbb{E}_\theta \psi -\mathbb{E}_{\theta^k} \psi),
	\end{equation*}
	which completes the proof. 
	\qed
\end{proof}


\begin{theorem}[1-D sample space (Theorem \ref{1d})]
	If $n=1$, then
	\begin{multline*}
	\operatorname{Dist}(\theta_0, \theta_1)^2 = \inf \Big\{\int_0^1\mathbb{E}_{Z\sim p}\|\frac{d}{dt}g(\theta(t), Z)\|^2\, dt \colon \theta(0)=\theta_0, \theta(1)=\theta_1 \Big\}, 
	\end{multline*}
	where the infimum is taken over all continuously differentiable parameter paths. 
	Therefore, we have 
	\begin{equation*}
	\tilde{D}(\theta, \theta^k)^2=\mathbb{E}_{Z\sim p}\|g(\theta, Z)-g(\theta^k, Z)\|^2.
	\end{equation*}
\end{theorem}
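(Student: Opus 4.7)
The plan is to specialize the geometric action functional for $\operatorname{Dist}$ derived in the appendix (equation (\ref{wmetric})) to one-dimensional sample space, where the Wasserstein-2 continuity equation is uniquely solvable in closed form through the cumulative distribution function. First, I would invoke the kinetic-energy identity $\dot\theta(t)^\top G(\theta(t))\dot\theta(t) = \int \|\nabla\Phi(t,x)\|^2 \rho(\theta(t),x)\, dx$ along any smooth parameter path $\theta(t)$, subject to the continuity equation $\partial_t \rho(\theta(t),x) + \partial_x(\rho(\theta(t),x)\Phi'(t,x)) = 0$; the whole proof consists in identifying $\Phi'$ with $\partial_t g$ under a pushforward, after which change of variables finishes the first equality and a convexity argument finishes the second.

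The main computational step is to write $\Phi'$ explicitly. Under the (1-D) assumption that $z \mapsto g(\theta, z)$ is monotone, which may be arranged up to a measure-preserving relabeling of the noise, the pushforward identity becomes $F_{\theta(t)}(g(\theta(t), z)) = F_p(z)$ in terms of CDFs. Differentiating in $t$ gives $\partial_t F_{\theta(t)}(g(\theta(t),z)) = -\rho(\theta(t), g(\theta(t),z))\, \partial_t g(\theta(t), z)$. On the other hand, integrating the 1-D continuity equation from $-\infty$ to $x$ gives $\rho(\theta(t),x)\Phi'(t,x) = -\partial_t F_{\theta(t)}(x)$. Combining the two and setting $x = g(\theta(t), z)$ yields the pointwise identity $\Phi'(t, g(\theta(t),z)) = \partial_t g(\theta(t),z)$. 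Changing variables $x = g(\theta(t),z)$ with $\rho(\theta(t),x)\,dx = p(z)\,dz$ then produces
\[
\dot\theta(t)^\top G(\theta(t))\dot\theta(t) = \mathbb{E}_{Z\sim p}\,\|\partial_t g(\theta(t), Z)\|^2,
\]
and integrating over $t$ and taking the infimum over admissible paths yields the first equality of the theorem.

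For the second equality, I would apply Cauchy-Schwarz (Jensen) in time, pointwise in $Z$:
\[
\|g(\theta_1, Z) - g(\theta_0, Z)\|^2 = \Big\|\int_0^1 \partial_t g(\theta(t), Z)\, dt\Big\|^2 \leq \int_0^1 \|\partial_t g(\theta(t), Z)\|^2\, dt.
\]
Taking expectations and swapping with the time integral gives $\mathbb{E}_Z\|g(\theta_1, Z) - g(\theta_0, Z)\|^2 \leq \int_0^1 \mathbb{E}_Z\|\partial_t g\|^2\, dt$, with equality along paths for which $t \mapsto g(\theta(t), Z)$ is affine $p$-almost surely. Such ``straight lines in sample space'' are the natural geodesic candidates; realizing one (or passing to the relaxation actually deployed by the paper in higher dimensions) attains the Jensen bound, and combined with the first equality identifies $\tilde D(\theta, \theta^k)^2$ with $\mathbb{E}_{Z\sim p}\|g(\theta, Z) - g(\theta^k, Z)\|^2$.

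The main obstacle I anticipate is the justification of the two hypotheses implicit in the argument: monotonicity of $z \mapsto g(\theta, z)$ (needed to identify the optimal Wasserstein velocity with the pullback of $\partial_t g$), and the realizability of the affine sample-space interpolation inside the parameter family (needed for equality in Jensen). In a fully general implicit model neither need hold verbatim, so the clean identities should be read either under a suitable structural assumption on the generator (monotone push-forward and, for the second equality, geodesic convexity of the image in 1-D Wasserstein space) or as exact formulas for the relaxed metric $\tilde D$ that is actually used in the algorithms of Section~\ref{subsec:algorithms}; this relaxation viewpoint is in any case consistent with the paper's extension of these formulas to sample spaces of arbitrary dimension.
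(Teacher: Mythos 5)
Your proof is essentially correct and reaches the same key identity as the paper, namely $\Phi'(t,g(\theta(t),z))=\partial_t g(\theta(t),z)$, but you get there by a genuinely different route. The paper works with the weak formulation: it computes $\frac{d}{dt}\mathbb{E}_{Z\sim p}f(g(\theta(t),Z))$ once through the continuity equation and once by direct differentiation, equates the two for all test functions $f\in C_c^\infty(\mathbb{R})$, and then uses that in one dimension $\nabla f$ ranges over essentially arbitrary functions to choose $\nabla f\circ g=\nabla\Phi\circ g-\partial_t g$ and force the $L^2(p)$ difference to vanish. You instead integrate the $1$-D continuity equation once, pass to cumulative distribution functions, and differentiate the monotone pushforward identity $F_{\theta(t)}(g(\theta(t),z))=F_p(z)$ in $t$. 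Both arguments secretly need the same thing, and your write-up is the more honest of the two about it: the paper's choice of $\nabla f$ is only admissible if $\partial_t g(\theta(t),\cdot)$ factors through $x=g(\theta(t),\cdot)$, i.e.\ precisely the injectivity/monotonicity of the generator (up to relabeling of the noise) that you flag explicitly; without it one only obtains $\nabla\Phi\circ g$ as a conditional expectation of $\partial_t g$ and hence an inequality. For the second display the comparison tilts further in your favor: the paper's proof simply stops after the kinetic-energy identity and asserts the formula for $\tilde D$, whereas you supply the Jensen bound $\mathbb{E}_Z\|g(\theta_1,Z)-g(\theta_0,Z)\|^2\le\int_0^1\mathbb{E}_Z\|\partial_t g\|^2\,dt$ and correctly identify the remaining issue --- equality requires the affine sample-space interpolation to be realizable within the parameter family, or else the formula must be read as the definition of the relaxed metric actually used in Section~\ref{subsec:algorithms}, which is indeed how the paper treats it in dimension greater than one. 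In short: same destination, a CDF-based rather than test-function-based derivation of the velocity identification, and a more careful accounting of the hypotheses.
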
 

\begin{proof}
	The implicit model is given by a push-forward relation $g_\theta \# p(z)=\rho(\theta, x)$, so that  
	\begin{align*}\label{a}
	\int_{\mathbb{R}^m} f(g(\theta, z))p(z)dz=\int_{\mathbb{R}^n} f(x)\rho(\theta,x)dx, 
	\end{align*}
	for any $f\in C_c^{\infty}(\mathbb{R}^n)$. 
	If $f\in C^{\infty}_c({\mathbb{R}^n})$, then
	\begin{equation}\label{form1}
	\begin{split}
	\frac{d}{dt}\mathbb{E}_{Z\sim p(z)} f(g(\theta(t),Z))
	=&\frac{d}{dt}\int_{\mathbb{R}^m} f(g(\theta(t),z))p(z)dz\\
	=&\frac{d}{dt}\int_{\mathbb{R}^n} f(x)\rho(\theta(t),x)dx\\
	=&\int_{\mathbb{R}^n} f(x)\frac{\partial}{\partial t}\rho(\theta(t),x)dx\\
	=&\int_{\mathbb{R}^n} f(x)(-\nabla\cdot(\rho(\theta(t),x)\nabla\Phi(t,x)))dx\\
	=&\int_{\mathbb{R}^n}\nabla f(x) \nabla\Phi(t,x)\rho(\theta(t),x)dx\\
	=&\int \nabla f(g(\theta, z)) \nabla\Phi(t,g(\theta, z)) p(z)dz, 
	\end{split}
	\end{equation}
	where the last equality holds from the push forward relation. 
	On the other hand, 
	\begin{equation}\label{form2}
	\begin{split}
	\frac{d}{dt}\mathbb{E}_{Z\sim p(z)} f(g(\theta(t),Z)) 
	&=\lim_{\Delta t\rightarrow 0}\mathbb{E}_{Z\sim p(z)}\frac{f(g(\theta(t+\Delta t), Z)-f(g(\theta(t),Z))}{\Delta t}\\
	&=\lim_{\Delta t\rightarrow 0}\int_{\mathbb{R}^m}\frac{f(g(\theta(t+\Delta t), z))-f(g(\theta(t), z))}{\Delta t}p(z)dz\\
	&=\int_{\mathbb{R}^m} \nabla f(g(\theta(t),z)) \frac{d}{dt}g(\theta(t),z) p(z)dz, 
	\end{split}
	\end{equation}
	where $\nabla$ and $\nabla\cdot$ are the gradient and divergence operators w.r.t.~$x\in{\mathbb{R}^n}$. 
	The second last equality holds from the push forward relation, and the last equality holds using integration by parts w.r.t.~$x$. 
	Since (\ref{form1}) equals (\ref{form2}) for any $f\in C_c^{\infty}({\mathbb{R}^n})$, we have 
	\begin{equation*}
	\int \nabla f(g(\theta, z)) \nabla\Phi(t,g(\theta, z)) p(z)\,dz = \int \nabla f(g(\theta(t),z)) \frac{d}{dt}g(\theta(t),z) p(z)\,dz.
	\end{equation*}
	Thus, 
	\begin{equation*}
	\begin{split}
	\int \nabla f(g(\theta, z)) \Big(\nabla\Phi(t,g(\theta, z))-\frac{d}{dt}g(\theta(t),z)\Big) p(z)dz=0.
	\end{split}
	\end{equation*}
	If $n=1$, then $\nabla f$ can be any function in $\mathbb{R}^1$. 
	For each $t$, choosing $\nabla f(g(\theta,z))=\nabla\Phi(t, g(\theta,z))-\frac{d}{dt}g(\theta(t),z)$, we obtain 
	\begin{equation*}
	\begin{split}
	\int |\nabla\Phi(t,g(\theta, z))-\frac{d}{dt}g(\theta(t),z)|^2 p(z)dz=0.
	\end{split}
	\end{equation*}
	Hence, 
	\begin{equation*}
	\frac{d}{dt}g(\theta(t),z)=\nabla\Phi(t,g(\theta(t),z)).    
	\end{equation*}
	In turn, by the definition of the push forward operation, we have 
	\begin{align*}
	\mathbb{E}_{Z\sim p(z)} \|\frac{d}{dt}g(\theta(t),Z)\|^2 
	=&\int_{\mathbb{R}^n} \|\nabla\Phi(t, g(\theta(t),z))\|^2p(z)\,dz\\
	=&\int_{\mathbb{R}^n} \|\nabla\Phi(t,x)\|^2\rho(\theta(t),x)\,dx,
	\end{align*}
	which finishes the proof.
	\qed
\end{proof}



\section{A practical description of the Wasserstein proximal}
\label{sec:rwp_detailed}

As mentioned in Section~\ref{resultsrwp}, the Relaxed Wasserstein Proximal is meant to be an easy-to-implement, drop-in regularization. 
For instructional purposes, we take a specific example to showcase the algorithm: Relaxed Wasserstein Proximal on Vanilla GANs (with non-saturating gradient for the generator):

\begin{itemize}
	\item  Given:
	\begin{itemize}
		\item A generator $g_\theta$, and discriminator $D_\omega$,
		\item The distance function $F_\omega(g_\theta) = \mathbb{E}_{x\sim\text{real}}[\log (D_\omega(x))] - \mathbb{E}_{z\sim \mathcal{N}(0,1)}[\log (1 - D_\omega(g_\theta(z))]$,
		\item Choice of optimizers, $\text{Adam}_{\omega}$ and $\text{Adam}_{\theta}$,
		\item Proximal step-sizes $h$, and generator iterations $\ell$, and
		\item Batch size $B$.
	\end{itemize}
	
	\item  Then the algorithm follows:
	\begin{enumerate}
		\item Sample real data $\{x_i\}_{i=1}^B$, and latent data $\{z_i\}_{i=1}^B$.
		\item Update the discriminator:
		\begin{flalign*}
		&\omega^{k}\leftarrow \text{Adam}_{\omega}\bigg(-\frac{1}{B} \sum_{i=1}^B \log(D_\omega(x_i)) - \frac{1}{B} \sum_{i=1}^B \log(1 - D_\omega(g_\theta(z_i)))\bigg)
		\end{flalign*}
		\item Sample latent data $\{z_i\}_{i=1}^B$
		\item 
		Perform Adam gradient descent $\ell$ times on the generator:
		\begin{flalign*}
		&\theta^{k} \leftarrow \text{Adam}_\theta \bigg(-\frac{1}{B} \sum_{i=1}^B \log(D_\omega(g_\theta(z_i))) - \frac{1}{B} \sum_{i=1}^B \frac{1}{2h} \|g_\theta(z_i) - g_{\theta^{k-1}}(z_i)\|_2^2 \bigg), \\
		& \qquad \qquad \text{ for $\ell$ number of times.}
		\end{flalign*}
		\item 
		Repeat steps 1--4 until a stopping criterion is met (e.g., maximum number of iterations). 
	\end{enumerate}
\end{itemize}

As we can see from the above description, the only difference between the standard way of training GANs and using the Relaxed Wasserstein Proximal, are the $\|g_\theta(z_i) - g_{\theta^{k-1}}(z_i)\|_2^2$ terms and the number of generator iterations $\ell$. 
Note that in this paper, we call a single loop of updating the discriminator once and then updating the generator a number of a times, an outer-iteration.

\section{Details on the experiments}
\label{sec:hp_for_rwp}

The hyperparameter settings for the RWP, Order-1 SBE, and Order-2 Diagonal SBE experiments in Section~\ref{section4} are the following: 

\begin{itemize}
	\item A mini-batch size of 64 for all experiments. 
	\item For CIFAR-10 with WGAN-GP: The Adam optimizer with learning rate $0.0001$,  $\beta_1=0.5$, and $\beta_2=0.9$ for both the generator and discriminator. 
	We used a latent space dimension of $128$. For RWP, we used $h=0.1$, and $\ell=10$ generator iterations. 
	For Order-1 SBE, we used $h=0.5$, and $\ell=5$. 
	For Order-2 Diagonal SBE, we used $h=0.2$ and $\ell=5$. 
	\item For CIFAR-10 with Vanilla and DRAGAN: The Adam optimizer with learning rate $0.0002$, $\beta_1=0.1$, and $\beta_2=0.999$ for both the generator and discriminator. 
	We used a latent space dimension of $100$. 
	For RWP, we used $h=0.2$, and $\ell=5$ generator iterations. 
	For Order-1 SBE, we used $h=0.2$ and $\ell=5$. 
	For Order-2 Diagonal SBE, we used $h=0.2$ and $\ell=5$. 
	\item For aligned and cropped CelebA with Vanilla: The Adam optimizer with learning rate $0.0002$, $\beta_1=0.5$, and $\beta_2=0.999$ for both the generator and discriminator. 
	We used a latent space dimension of $100$. 
	For RWP, we used $h=0.2$, and $\ell=5$ generator iterations. 
	For Order-1 SBE, we used $h=0.2$ and $\ell=5$. 
	For Order-2 Diagonal SBE, we used $h=0.2$ and $\ell=5$. 
	\item For aligned and cropped CelebA with WGAN-GP: The Adam optimizer with learning rate $0.0001$, $\beta_1=0.5$, and $\beta_2=0.9$ for both the generator and discriminator. 
	We used a latent space dimension of $128$. 
	For RWP, we used $h=0.1$, and $\ell=10$ generator iterations. 
	For Order-1 SBE, we used $h=0.5$ and $\ell=5$, but we raised the number of discriminator iterations to $7$ (as opposed to the usual $5$. 
	For Order-2 Diagonal SBE, we used $h=0.2$ and $\ell=5$. 
	\item For the high-learning rate for CelebA with WGAN-GP: The hyperparameters are the same as WGAN-GP except in the following: 
	the learning rate is raised to $0.002$, for RWP we have $h=0.1$ and $\ell=5$, 
	for Order-1 SBE we have $h=0.05$ and $\ell=5$, 
	for Order-2 Diagonal SBE we have $h=0.05$ and $\ell=3$. 
	\item For the high Adam $\beta_1$ momentum for CelebA with WGAN-GP: The hyperparameters are the same as WGAN-GP except in the following: 
	the $\beta_1$ parameter is raised to $0.5$ (as opposed to $0$), 
	for RWP we have $h=0.1$ and $\ell=10$, 
	for Order-1 SBE we have $h=0.05$ and $\ell=5$, 
	for Order-2 Diagonal SBE we have $h=0.05$ and $\ell=3$. 
\end{itemize}

\section{More figures}

\begin{figure}[H]
	\centering
	\includegraphics[clip=true, trim=1cm 0.75cm 0cm 0cm,
	width=.8\linewidth]{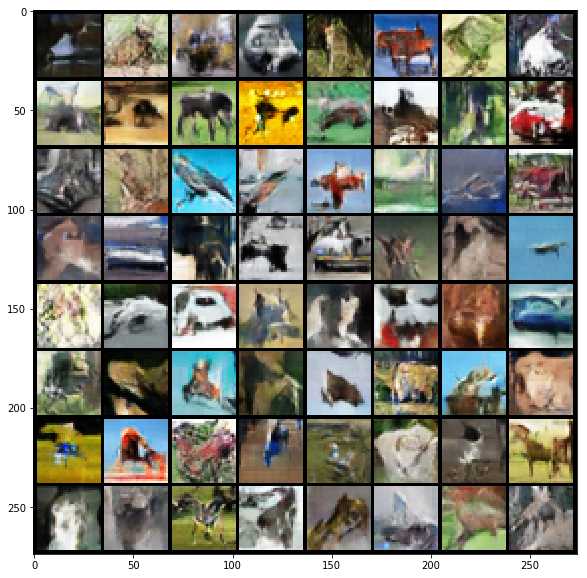}
	\caption{A sample of images generated by a neural network that was trained  on the CIFAR-10 dataset using the WGAN-GP framework with RWP regularization.}
	\label{app:fig:64_sample_cfiar10_wgangp_rwp}
\end{figure}

\begin{figure}[H]
	\centering
	\includegraphics[clip=true, trim=1cm 0.75cm 0cm 0cm,
	width=.8\linewidth]{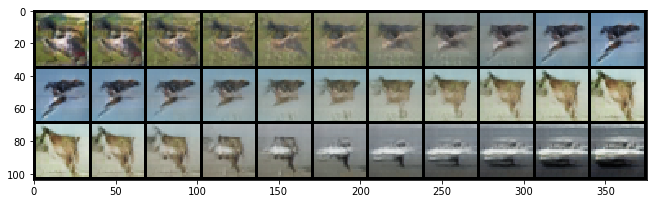}
	\caption{A latent space walk for a network trained using the WGAN-GP framework with RWP regularization on CIFAR-10. 
		The latent space walk is interpolating between 4 points in latent space. 
		The smooth transitions indicate good generalization. 
	}
	\label{app:fig:cifar10_wgangp_rwp}
\end{figure}

\begin{figure}[H]
	\centering
	\includegraphics[clip=true, trim=1cm 0.75cm 0cm 0cm,
	width=.8\linewidth]{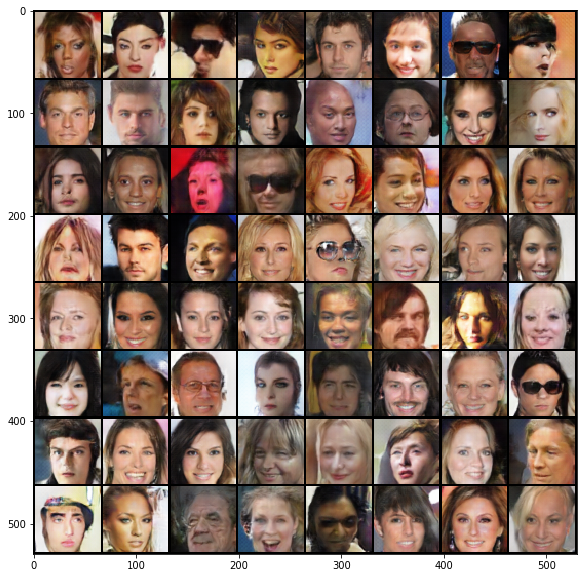}
	\caption{A sample of images generated by a neural network that was trained on the CelebA dataset using the Vanilla GAN framework with RWP regularization. }
	\label{app:fig:64_sample_celeba}
\end{figure}

\begin{figure}[H]
	\centering
	\includegraphics[clip=true, trim=1cm 0.75cm 0cm 0cm,
	width=.8\linewidth]{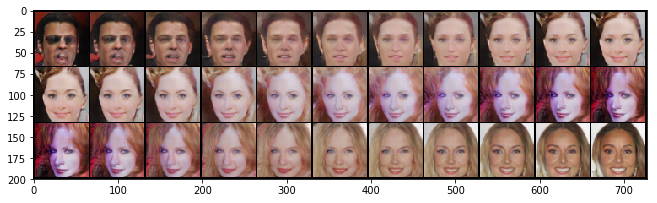}
	\caption{A latent space walk for an implicit generative model that was trained in the Vanilla GAN framework using RWP regularization on the CelebA dataset. As we have smooth transitions, this shows the generator is not overfitting. The latent space walk is done by interpolating between 4 points in the latent space.}
	\label{app:fig:latent_space_walk}
\end{figure}



\section{A three neural-network version}

\begin{algorithm}[H]
	\caption{Semi-backward Euler method, where $F_\omega$ is a parameterized function to minimize.}
	\label{alg:wassprox}
	\begin{algorithmic}[1]
		\REQUIRE $F_\omega$, a parameterized function to minimize (e.g. Wasserstein-1 with a parameterized discriminator). $g_\theta$ the generator. $\Phi_p$ the potential.
		\REQUIRE $h$ the proximal step-size, $m$ the batch size.
		\REQUIRE $\text{Optimizer}_{{F}_\omega}$, $\text{Optimizer}_{g_\theta}$, and $\text{Optimizer}_{\Phi_p}$
		\REQUIRE The number of generator iterations and p iterations to do per update.
		\FOR{$k = 0$ \TO max iterations}
		\STATE{Sample real data $\{x_i\}_{i=1}^B$ and latent data $\{z_i\}_{i=1}^B$.}
		\STATE{$\omega^k \leftarrow \text{Optimizer}_{{F}_\omega} \left(\frac{1}{B} \sum_{i=1}^B {F}_\omega(g_\theta(z_i))\right)$}
		\FOR{$s = 0$ \TO max iterations for phi}
		\STATE{Sample latent data $\{z_i\}_{i=1}^B$}
		\STATE{$p^k \leftarrow \text{Optimizer}_{\Phi_p} \left( \frac{1}{h}\frac{1}{B}  \sum_{i=1}^B \Phi_p(g_\theta(z_i)) - \Phi_p(g_{\theta^{k-1}}(z_i)) - \frac{1}{2} \|\nabla \Phi_p(g_{\theta^{k-1}}(z_i))\|^2 \right)$}
		\ENDFOR
		\FOR{$\ell=0$ \TO max iterations for generator}
		\STATE{Sample latent data $\{z_i\}_{i=1}^B$}
		\STATE{$\theta^k \leftarrow \text{Optimizer}_{g_\theta} \left( \frac{1}{B} \sum_{i=1}^B  {F}_\omega(g_\theta(z_i)) + \right.$}
		\STATE{\hspace{3cm}$\left.\frac{1}{h}\left(\Phi_p(g_\theta(z_i)) - \Phi_p(g_{\theta^{k-1}}(z_i)) - \frac{1}{2} \|\nabla \Phi_p(g_{\theta^{k-1}}(z_i))\|^2\right)\right)$}
		\ENDFOR
		\ENDFOR
	\end{algorithmic}
\end{algorithm}

\bibliographystyle{plain}
\bibliography{WNG}

\end{document}